\newcommand{\Bmath}[1]{\mbox{\bf {#1}}}
\newcommand{\Bc}{\Bmath{c}}
\newcommand{\Bx}{\Bmath{x}}
\newcommand{\By}{\Bmath{y}}
\newcommand{\Bw}{\Bmath{w}}
\newcommand{\Bv}{\Bmath{v}}
\newcommand {\e    } {\!+\!                  }
\def\x{{\Bmath x}}
\def\e{{\Bmath e}}
\numberwithin{equation}{section}
\newcommand{\norm}[1]{\left\lVert#1\right\rVert}
\newcommand{\PermutationReservoir}{Simple Multi-Cycle Reservoir}
\begin{document}

\title{Simple Cycle Reservoirs are Universal}

\author{\name Boyu Li \email boyuli@nmsu.edu \\
       \addr Department of Mathematical Sciences\\
       New Mexico State University\\
       Las Cruces, New Mexico, 88003, USA
       \AND
       \name Robert Simon Fong \email r.s.fong@bham.ac.uk \\
       \addr School of Computer Science\\
       University of Birmingham\\
       Birmingham, B15 2TT, UK
       \AND
       Peter Ti\v{n}o\email p.tino@bham.ac.uk \\
       \addr School of Computer Science\\
       University of Birmingham\\
       Birmingham, B15 2TT, UK       }
       
% \author{Boyu Li, Robert Simon Fong, Peter Ti\v{n}o}
\editor{Christian Shelton}
\maketitle

%Notation:
% P - reserved for permutations
% S - unitary matrix in diagonalization
% Q - canonical proj/embedding between C^n's

\begin{abstract}%

 {Reservoir computation models form a subclass of recurrent neural networks with fixed non-trainable input and dynamic coupling weights. Only the static readout from the state space (reservoir) is trainable, thus avoiding the known problems with propagation of gradient information backwards through time. Reservoir models have been successfully applied in a variety of tasks and were shown to be universal approximators of time-invariant fading memory dynamic filters under various settings. Simple cycle reservoirs (SCR) have been suggested as severely restricted reservoir architecture, with equal weight ring connectivity of the reservoir units and input-to-reservoir weights of binary nature with the same absolute value. Such architectures are well suited for hardware implementations without performance degradation in many practical tasks.
In this contribution, we rigorously study the expressive power of SCR in the complex domain  and show that they are capable of universal approximation of any unrestricted linear reservoir system (with continuous readout) and hence any time-invariant fading memory filter over
uniformly bounded input streams.
}
%The work concerns with topology/structure of linear reservoir systems represented by Echo State Networks, which is typically defined by a triplet consisting of a dynamical coupling matrix, input map, and a trainable readout map. In practice the dynamical coupling matrix is often randomly generated, yet in recent literature it was shown that ``simpler" structures have rival performance. One notable example being Simple Cycle Reservoirs (SCR), where the coupling matrix is one-parameter cyclic permutation matrix and the input map is simply fixed a-periodic sign patterns. We then ask the question: given an arbitrary linear reservoir system, does there exist a SCR that does the same? This work answers the question affirmatively over the complex field $\mathbb{C}$, in a constructive fashion.
\end{abstract}

\begin{keywords}
  Reservoir Computing, Simple Cycle Reservoir, Universal Approximation 
\end{keywords}
\section{Introduction}

 {When learning from time series data it is necessary to adequately account for temporal dependencies in the data stream. Two main approaches emerged in the machine learning literature. In the first approach, ``time is traded for space" - under the assumption of finite memory, we collect the relevant time series history in the form of extended input that is then further processed in a static manner. Various forms of neural auto-regressive models \cite{TLin96} or transformers provide examples of this approach \cite{NIPS2017_3f5ee243}. In the second approach, we impose a parametric state-space model structure in which the state vector dynamically encodes all relevant information in the time series observed so far. The output is then again read-out from the state in the form of a static readout.
Recurrent neural networks (e.g. \cite{Downey:2017:PSR:3295222.3295354}) and 
Kalman filters \cite{Kalman1960} are examples of this work stream. 
}

 {
Of particular interest to us is a class of recurrent neural networks where the state formation and update part
of the architecture is fixed and non-trainable.
Models of this kind  \cite{Jaeger2001,Maass2002,Tino2001} are known as ``reservoir computation (RC) models"
\cite{Lukoservicius2009} with 
Echo State Networks (ESN) \cite{Jaeger2001,Jaeger2002,jaeger2002a,Jaeger2004} being one of its simplest representatives.
In its basic form, ESN is a recurrent neural network with a fixed state transition part (reservoir) and a simple trainable linear readout. In addition, the connection weights in the ESN reservoir, as well as the input weights are randomly generated. 
The reservoir weights need to be scaled
to ensure the {``Echo State Property"} (ESP): the reservoir state is an { ``echo"} of the entire input history and does not depend on the initial state. 
%Scaling reservoir weights so that the largest singular value is smaller than 1 makes the reservoir dynamics contractive and guarantees the ESP. 
 {In practice, sometimes it is the spectral
radius that is the focus of scaling, although in general, spectral radius $< 1$ does not guarantee the
ESP. However,  in this study we focus on ESNs with linear reservoir dynamics and in this case \citet{Grig2021JoGM} proved that spectral radius $< 1$ is actually equivalent to the ESP (Proposition 4.2 (i)).}
}

 {
ESNs have been successfully applied in a variety of 
tasks \cite{Jaeger2004,Bush2005,Tong2007}. 
Many extensions of the classical ESN have been suggested in the literature, e.g. deep ESN \cite{GALLICCHIO201787}, intrinsic
plasticity \cite{Schrauwen2008,Steil2007}, decoupled reservoirs
\cite{Xue2007}, 
leaky-integrator reservoir units \cite{Jaeger2007},  filter neurons with delay-and-sum readout \cite{holzmann2009} etc.
}

 {
Given the simplicity of ESN, it is natural to ask what is their representational power, compared with the general class of time-invariant fading memory dynamic filters. In a series of influential papers,
Grigoryeva and Ortega rigorously studied this question and showed the ``universality" of ESN as simple yet powerful approximators of fading memory filters \cite{grigoryeva2018echo,Grigoryeva2018}. Universal approximation capability was first established in the $L^\infty$
sense for deterministic, as well as almost surely uniformly
bounded stochastic inputs \cite{Grigoryeva2018}. This was later extended in \cite{gonon2019reservoir} to $L^p$, $1\le p<\infty$ and not necessarily almost surely uniformly
bounded stochastic inputs.
Crucially, ESN universality can be obtained even if the state transition dynamics is linear, provided the readout map is polynomial \cite{Grigoryeva2018}.
}

 {
However, the above results are existential in nature and the issue of what exactly the fixed reservoir and input-to-reservoir couplings should be remains an open problem. Indeed, the specification of such couplings requires numerous trials and even luck \cite{Xue2007} and strategies to select different reservoirs for different applications have not been adequately devised. Random
connectivity and weight structure of the reservoir is unlikely to be optimal. Moreover, imposing a constraint on the spectral radius of the reservoir matrix is a weak tool to set the reservoir parameters \cite{Ozturk2007}. 

%Ti\v{n}o and Rod\'{a}n 
\cite{rodan2010minimum} posed the following question: What is the minimal number of degrees of freedom in the reservoir design  to achieve performances on par with the ones reported in the reservoir computation literature? The answer was rather surprising: 
%Besides the reservoir size, 
It is often sufficient to consider connections forming a simple cycle (ring) among the reservoir units, all connections with the same weight. As for the input-to-reservoir coupling, in the case of linear reservoirs with non-linear readout, all that matters is the sign pattern across the input weights, the absolute value of the weights can be the same (e.g. set to 1). While such extremely constrained reservoir architectures are well suited for hardware implementations \cite{bienstman2017,Appeltant2011InformationPU,NTT_cyclic_RC}, it is less obvious to understand why simple cycle reservoirs (SCR) appear to be sufficient in many applications.
Some headway in this direction has been made along the lines of memory capacity \cite{rodan2010minimum} and temporal feature spaces \cite{Tino_JMLR_2020}.
Here we ask a different question: Can it be that simple cycle reservoir structures are actually universal in the sense outlined above? We will show that in the complex domain (cyclic coupling of reservoir units with the same (complex) weight; input-to-reservoir couplings constrained to $\pm1$, $\pm i$) the answer is ``yes"! If we constrain the input-to-reservoir couplings to $\pm1$, a twin SCR is needed with two reservoir cycles operating in parallel on the same input stream.
}

\section{The Setup}
%  {
% \begin{enumerate}
%     \item Changed everything over $\mathbb{C}$ because when we diagonalize unitary matrix, we may need complex matrix.
%     \item full-cycle permutation? circular shift?
% \end{enumerate}
% }

 {Let us first briefly recall the notion of fading memory property in the context of our study. Consider input-output systems (filters) that map  $\{\Bc_t\}_{t\in\mathbb{Z}_-} \subset \mathbb{C}^m$ to $\{\By_t\}_{t\in\mathbb{Z}_-} \subset \mathbb{C}^d$ with the imposition that at each $t$ the output $\By_t$
is determined only by the past inputs $\{\Bc_{t'}\}_{t' \le t}$. Such systems are called causal.

We would now like to characterise situations where the influence of inputs from deeper past on the present output is gradually fading out. In other words, under such input-output systems, given a time instance $t$, two input sequences $\Bc$ and $\Bc'$ having ``similar recent histories" up to time $t$
(but not necessarily the deeper past ones) 
will yield ``similar outputs" at $t$. This can be formalised for example through topological arguments as follows (see e.g. \citep{Grigoryeva2018}):
Consider a norm $\norm{\cdot}$ on $\mathbb{C}^m$. The infinite product space consisting of left-infinite sequences $\{\Bc_t\}_{t\in\mathbb{Z}_-}$ can be endowed with a Banach space structure by considering e.g. the supremum norm assigning to each sequence
$\Bc = \{\Bc_t\}_{t\in\mathbb{Z}_-}$ the norm
\[
\norm{\Bc}_\infty := \sup_{t\in\mathbb{Z}_-} \norm{\Bc_t}.
\]
If we wanted to assign more weight on the recent items than on the past ones, we could modify the supremum norm into a weighted norm, 
\[
\norm{\Bc}_{\Bw} := \sup_{t\in\mathbb{Z}_-} \norm{w_t \cdot \Bc_t},
\]
where 
$\Bw = \{w_t\}_{t\in\mathbb{Z}_-}$ is a weighting sequence, i.e. a strictly decreasing sequence (in the reverse time order) of positive real numbers with a fixed maximal element (e.g. $w_0=1$). The space 
%(\mathbb{C}^m)^{\mathbb{Z}_-}$
$\{ \Bc \in (\mathbb{C}^m)^{\mathbb{Z}_-} | \ \norm{\Bc}_{\Bw} < \infty\}$
equipped with the weighted norm $\norm{\cdot}_{\Bw}$ forms a Banach space\footnote{
In the case of uniformly bounded inputs  (the setting of this study), all left-infinite sequences have finite weighted norm.}
\citep{grigoryeva2018echo}.
Analogously, given a (possibly different) weighting sequence $\Bv = \{v_t\}_{t\in\mathbb{Z}_-}$, we  
define a norm on the output left-infinite sequences $\By = \{\By_t\}_{t\in\mathbb{Z}_-}$,
$\norm{\By}_{\Bv} := \sup_{t\in\mathbb{Z}_-} \norm{v_t \cdot \By_t}$ (the norm $\norm{\cdot}$ is this time defined on $\mathbb{C}^d$). 
We now require
% This can be formalised by requiring 
the maps realised by the causal input-output systems be continuous with respect to the topologies generated by the weighted norms $\norm{\cdot}_{\Bw}$ and $\norm{\cdot}_{\Bv}$. Such input-output systems are said to have the {\it fading memory property} (FMP)\footnote{
We note that the systems we study in this paper will have a stronger FMP, in particular, the $\lambda$-exponential FMP, for some $0<\lambda<1$, where the elements of the weighting sequence are given by $w_t = e^{\lambda t}$. However, for the case of uniformly bounded inputs we study here, the $\lambda$-exponential FMP implies FMP for any weighting sequence \citep{Grigoryeva2018}.}.  
}

 {We now proceed by introducing the basic building blocks needed for the developments in this study.}

\begin{definition}\label{def.lrc} A \textbf{linear reservoir system} is formally defined as the triplet $R:= (W,V,h)$ where the \textbf{dynamic coupling} $W$ is an $n\times n$ matrix, the \textbf{input-to-state coupling} $V$ is an $n\times m$ matrix, and the state-to-output mapping (\textbf{readout}) $h:\mathbb{C}^n \to \mathbb{C}^d$ is a (trainable) continuous function. 

%\comment{Uniform continuity of $h$: why is this assumption necessary? Can we make this assumption in general? In practice the readout map is mostly linear/ridge regression; linear maps are unif. cts. In general I think no assumptions are made on the readout map but it was sort of assumed to be ``anything"? 
%Re: we assume $c_t$ is uniformly bounded, so $x_t$ should also be bounded. Then $h$ is kind of defined on a compact set and therefore uniform cts should follow from cts Re.Re. Ah I see if its not an assumption I think should write it that way then in case we get questioned}. 

%\comment{Uniform continuity of $h$: why is this assumption necessary? Can we make this assumption in general? In practice the readout map is mostly linear/ridge regression; linear maps are unif. cts. In general I think no assumptions are made on the readout map but it was sort of assumed to be ``anything"? 
%Re: we assume $c_t$ is uniformly bounded, so $x_t$ should also be bounded. Then $h$ is kind of defined on a compact set and therefore uniform cts should follow from cts Re.Re. Ah I see if its not an assumption I think should write it that way then in case we get questioned}. 

The corresponding linear dynamical system is given by:
\begin{equation} \label{eq.system}
   \begin{cases} \Bx_t &= W \Bx_{t - 1} + V \Bc_t \\
    \By_t &= h(\Bx_t)
    \end{cases}
\end{equation}
where $\{\Bc_t\}_{t\in\mathbb{Z}_-} \subset \mathbb{C}^m$, $\{\Bx_t\}_{t\in\mathbb{Z}_-} \subset \mathbb{C}^n$, and $\{\By_t\}_{t\in\mathbb{Z}_-} \subset \mathbb{C}^d$ are the external inputs, states and outputs, respectively.
We abbreviate the dimensions of $R$ by $(n,m,d)$.

We make the following assumptions for the system:
\begin{enumerate}
    \item $W$ is assumed to be strictly \textbf{contractive}. In other words, its operator norm $\norm{W}<1$. The system \eqref{eq.system} thus satisfies the fading memory property (FMP).
    \item We assume the input stream is $\{\Bc_t\}_{t\in\mathbb{Z}_-}$ is \textbf{uniformly bounded}. In other words, there exists a constant $M$ such that $\norm{\Bc_t}\leq M$ for all $t\in\mathbb{Z}_-$. 
\end{enumerate}
 {The contractiveness of $W$ and the uniform boundedness of input stream imply that the images of the inputs $\Bc \in (\mathbb{C}^m)^{\mathbb{Z}_-}$ under the linear reservoir system live in a compact space $X \subset{\mathbb{C}}^n$. With slight abuse of mathematical terminology we call $X$ a \textbf{state space}.}
\end{definition}

% Under these assumptions, for each 
% $\{c_t\}_{t\in\mathbb{Z}_-}$, the system~\ref{eq.system} has a unique solution given by 
% \begin{align*}
%     %x_0  = 
%     \Bx_t &=\sum_{n\geq 0} W^n V \Bc_{t-n}, \\
%     \By_t &= h(\Bx_t). 
% \end{align*}
% {[I STILL THINK, GIVEN EQ. \ref{eq.system}, THAT $x_0$ IS CONFUSING]}
% {Removed it! No idea what that x_0 was for.}
%  {I AGREE, WE CAN SAY E.G.:\\
Under the assumptions outlined above in Definition~\ref{def.lrc}, for each left infinite time series
$c = \{\Bc_t\}_{t\in\mathbb{Z}_-}$, the system~\eqref{eq.system} has a unique solution given 
by
\begin{align*}
    {\Bx_t}(c) &=\sum_{n\geq 0} W^n V \Bc_{t-n}, \\
    {\By_t}(c) &= h({{\Bx_t}(c)}). 
\end{align*}
To ease the mathematical notation we will refer to the solution simply as $\{(\Bx_t,\By_t)\}_t$.
% \\THIS SHOULD MAKE OUR PRESENTATION MORE READABLE. NO NEED FOR HATS! ;-)}

\begin{definition} For two reservoir systems $R=(W,V,h)$ (with dimensions $(n,m,d)$) and $R'=(W', V', h')$ (with dimensions $(n',m,d)$): 
\begin{enumerate}
    \item We say the two systems are \textbf{equivalent} if for any input stream, the two systems generate the same output stream. More precisely, for any input $c=\{\Bc_t\}_{t\in\mathbb{Z}_-}$
    , the {solutions $\{(\Bx_t,\By_t)\}_t$ and $\{(\Bx'_t,\By'_t)\}_t$ for systems $R$ and $R'$, given by:
    \begin{align*}
        {\By_t} &= h \left(\Bx_t(c)\right) = h\left(\sum_{j\geq 0} W^j V \Bc_{t-j}\right) \ \ \hbox{and}\\
        {\By_t'} &= h' \left(\Bx'_t(c)\right) = h'\left(\sum_{j\geq 0} \left(W'\right)^j V' \Bc_{t-j}\right),
    \end{align*}
    respectively,} satisfy ${\By}_t={\By}_t'$ for all $t$.
    % {NOTE THAT WE USE $n$ TO DENOTE STATE SPACE DIMENSIONALITY.}
    \item For $\epsilon>0$, we say the \textbf{two systems are $\epsilon$-close} if the outputs of the two systems, given any input stream, are $\epsilon$-close. That is (under the notation above),  $\norm{{\By}_t-{\By}_t'}_2<\epsilon$ for all $t$.
    %  {SHOULDN'T WE SPECIFY THE NORM HERE? OR AT LEAST SAY THAT GIVEN A NORM $\norm{\cdot}$ ON $\mathbb{R}^d$, WE SAY THAT THE TWO SYSTEMS ARE $\epsilon$-CLOSE... }  {Re: Used 2-norm for now.}
\end{enumerate}
\end{definition}

\begin{remark}
Since norms on finite-dimensional spaces are equivalent, we can replace the $2$-norm in the definition of $\epsilon$-close by any other norm on ${\mathbb{C}}^d$. Our main results do not depend on the particular choice of the norm. All subsequent norms over scalar fields are $2$-norms unless specified otherwise.
\end{remark}

 {
\begin{remark}
Our notion of system equivalence differs from the more structural (iso)morphism approaches sometimes taken in control and systems theory. For example, the system equivalence in \cite{Grig2021JoGM} is treated as system isomorphism, that is, equivalence not only in terms of filter map equivalence - ``the same inputs leading to the same outputs'' - but also equivalence in terms of preservation of the internal dynamics. In particular, it involves a map between the states of the two systems: two systems are isomorphic if there is a bijection $f$ that maps the states of one system $R_1$ to the states of another system $R_2$ such that $f$ preserves the state evolution, as well as the readout. 
\end{remark}
}
        
For the rest of the section, we outline the main results of the paper. We begin by following definitions:

\begin{definition} Let $P=[p_{ij}]$ be an $n\times n$ matrix.
\begin{enumerate}
    \item We say $P$ is a \textbf{permutation matrix} if there exists a permutation $\sigma$ in the symmetric group $S_n$ such that $p_{ij}=\begin{cases} 1, &\text{ if }\sigma(i)=j, \\0, &\text{ if otherwise.}\end{cases}$
    \item We say a permutation matrix $P$ is a \textbf{full-cycle permutation}\footnote{Also called left circular shift or cyclic permutation in the literature} if its corresponding permutation $\sigma\in S_n$ is a cycle permutation of length $n$. 
\end{enumerate}
\end{definition}
Let $P$ be an $n\times n$ permutation matrix associated with $\sigma\in S_n$, and let $\{e_i\}_{i=1}^n$ be the canonical basis for $\mathbb{C}^n$. One can easily verify that $Pe_i = e_{\sigma(i)}$, which defines a permutation of the basis vectors by $\sigma$.  

We call a matrix $W$ \textbf{a contractive permutation} (resp. \textbf{a contractive full-cycle permutation} if $W=aP$ for some  scalar $a\in (0,1)\subset \mathbb{R}$ and $P$ is a permutation (resp. full-cycle permutation). % with $|c|<1$

%  {Rod\'{a}n and Ti\v{n}o 
 {\cite{rodan2010minimum} introduced a minimum complexity reservoir system with  a contractive full-cycle permutation dynamical coupling matrix. In the following definition, we recall its linear form, as well as its extension to input-to-state coupling in the complex domain:}

\begin{definition}
\label{def.rc}
    A linear reservoir system $R = \left(W,V,h\right)$ with dimensions $(n,m,d)$ is called:
    \begin{itemize}

    \item A \textbf{Simple Cycle Reservoir (SCR)}\footnote{
     {Note that in \cite{rodan2010minimum} there is an additional requirement that the sign pattern in the binary input-to-state coupling $V$ is a-periodic. Also, although all the input weights have the same absolute value, it does not have to be 1. The sign aperiodicity and scaling of the input weights  are not needed for the developments in this study.}} if: % this is the SCR in literature. if 
    \begin{enumerate}
        \item $W$ is a contractive full-cycle permutation, and
        \item $V \in \mathbb{M}_{n \times m}\left(\left\{-1,1\right\}\right)$. 
    \end{enumerate}

    \item A \textbf{Complex Simple Cycle Reservoir ($\mathbb{C}$-SCR)} 
    %\comment{RC is reserved for Reservoir Computing, maybe more explicitly $\mathbb{C}$-SCR?} 
    if:
    \begin{enumerate}
        \item $W$ is a contractive full-cycle permutation, and
        \item $V \in \mathbb{M}_{n \times m}$ and all entries of $V$ are either $\pm 1$ or $\pm i$ . 
    \end{enumerate}

    \end{itemize}  

\end{definition}

 {We also introduce a composite reservoir structure with multiple contractive full-cycle permutation couplings and binary input weights:  }

 {
\begin{definition}
\label{def.mcr}
For $k > 1$, 
    a linear reservoir system $R = \left(W,V,h\right)$ with dimensions $(n,m,d)$ is called a
 {\bf Multi-Cycle Reservoir of order $k$}
if:
    \begin{enumerate}
        \item $W$ is block-diagonal with $k$ (not necessarily identical) blocks of   contractive full-cycle permutation couplings $W_1$, ..., $W_k$, of dimensions
        $n_i \times n_i$, $i=1,2,...,k$,
 \begin{align*}
     W := \begin{bmatrix} W_1 & & &  \\ 
     & W_2 & &  \\
     & & \ddots &  \\
     & & & W_k \end{bmatrix}, 
     &\quad 
     \sum_{i=1}^k n_i = n,
 \end{align*}
        and 
        \item $V \in \mathbb{M}_{n \times m}\left(\left\{-1,1\right\}\right)$. 
    \end{enumerate}

\end{definition}
}
 {
The state $\x \in \mathbb{C}^n$ of such a multi-cycle system is composed of the $k$ component states $\x^{(i)} \in \mathbb{C}^{n_i}$,
$i=1,2,...,k$, 
$\x = (\x^{(1)}, ..., \x^{(k)})$.
In our case, the readout will act on a linear combination of the component states,
\[
h(\x) = h\left(\sum_{i=1}^k a_i \cdot \x^{(i)} \right),
\]
where $a_i \in \mathbb{C}$ are mixing coefficients.
}

 {
Of particular interest will be  simple multi-cycle structures with identical full-cycle blocks:

\begin{definition}
\label{def.smcr}
    A linear reservoir 
    %system $R = \left(W,V,h\right)$ with dimensions $(n,m,d)$ 
    is called a
{\bf Simple Multi-Cycle Reservoir (SMCR) of order $k$} if it is a Multi-Cycle Reservoir of order $k$ with $k$ \emph{identical} (contractive full-cycle permutation) blocks.
% \textbf{\PermutationReservoir (SMCR)} 
%if:
%    \begin{enumerate}
%        \item $W$ is block-diagonal with identical contractive full-cycle permutation blocks, and 
%        \item $V \in \mathbb{M}_{m \times n}\left(\left\{-1,1\right\}\right)$. 
%    \end{enumerate}

\end{definition}
}

 {
Finally, we introduce a minimal version of  the multi-cycle reservoir system with just two (not necessarily identical) blocks:

\begin{definition}
\label{def.twin}
    A linear reservoir 
    %system $R = \left(W,V,h\right)$ with dimensions $(n,m,d)$ 
    is called a
{\bf Twin Simple Cycle Reservoir (Twin SCR)} if it is a Multi-Cycle Reservoir of order $2$.
\end{definition}
}

 {We note that while this study considers contractive dynamics (by requiring the operator norm of the state space coupling to satisfy $\norm{W}<1$), for the FMP to hold one only needs a weaker condition on $W$ involving the spectral radius, namely $\rho(W) < 1$. 
FMP under the stronger condition $\vert\vert W \vert\vert < 1$ has been established in \cite{Jaeger2010ErratumNF} or \cite{JMLR:v20:19-150}(Theorem 7). Under the weaker condition $\rho(W) < 1$ it can be shown using the ESP property in \cite{Grig2021JoGM}(Proposition 4.2 (i)) together with \cite{manjunath2022embedding}(Theorem 3). 
}

 {
\subsection{Summary of Essential Notations}

We conclude this section by summarizing 
%the summarizes 
the essential mathematical notations used in the subsequent sections.
%discussions. 

Fields are denoted by  blackboard-bold capital letters, for example, $\mathbb{R}, \mathbb{C}$ and $\mathbb{K}$ denote real numbers, complex numbers and arbitrary field respectively. In addition, we denote by $\mathbb{T} \subset \mathbb{C}$ the unit circle in $\mathbb{C}$. The set $\mathbb{M}_{m \times n}(\mathbb{K})$ contains all $m$-by-$n$ matrices over the field $\mathbb{K}$. Given an $n$-by-$m$ matrix $W \in \mathbb{M}_{n\times m}$, $\norm{W}$ denotes its operator norm. Hilbert spaces are written in calligraphic font such as $\mathcal{H}$.

Capital letters typically denote matrices with some symbols reserved for specially structured matrices. Examples include $P$ for permutation matrices, $U$ for unitary matrices, and D for diagonal matrices (unless specified otherwise). The exceptions are:
\begin{itemize}
    \item $M$ -- the uniform upper bound of input stream.
    \item $J:\mathbb{C}^{n} \hookrightarrow \mathbb{C}^{n'}$ -- the canonical embedding 
of $\mathbb{C}^n$ onto the first $n$-coordinates of $\mathbb{C}^{n'}$ with $n' \geq n$, and
    \item $Q_n:\mathbb{C}^{n_1} \hookrightarrow \mathbb{C}^n$ -- the canonical projection of the first $n$ coordinates ($n_1 \geq n$).
\end{itemize} 

Bold lower case letters such as $\Bv \in \mathbb{K}^n$ denote column vectors with the assumption that $\mathbb{K} = \mathbb{C}$ throughout the paper, unless specified otherwise. For vectors $\Bv \in \mathbb{K}^n$, $\norm{\Bv} = \norm{\Bv}_2$ denotes its Euclidean norm. Vector valued left infinite time series are indexed with a subscript $t$, denoted by $\{\Bx_t\}_{t\in\mathbb{Z}_-} \subset \mathbb{C}^n$. For vectors $\Bx \in \mathbb{C}^n$, the entries are denoted by $\left(x_1,x_2,\ldots,x_n\right)^\top$. In the case where there are multiple reservoir systems involved, the state vectors of the $i^{th}$ reservoir system would be indicated with an additional upper script $\Bx_t^{(i)} \in \mathbb{C}^n$.
}

\section{Summary of Main Results}

 {Motivated by the minimum complexity reservoir architecture \cite{rodan2010minimum}, our main goal is to study the universality properties of such radically constrained reservoir structures with simple cyclic interconnections in the dynamic coupling and binary input weights. 

  {The flow of our argumentation is summarized in Figure~\ref{fig:fullpaper_simple} below. Each arrow in the diagram denotes an approximation step. The symbol $\prec$ indicates an increase in the approximant state space dimensionality.}
 In particular, 
 we show (Theorem~\ref{thm.main.cscr}) that {\em any} linear reservoir system \eqref{eq.system} can be approximated by a Complex Simple Cycle Reservoir. The situation changes if we wanted to constrain the input weights solely to $\{-1,+1\}$ as in the minimum complexity reservoirs \cite{rodan2010minimum}, while maintaining the universal approximation capabilities.
One can either use a Twin Simple Cycle Reservoir (Theorem~\ref{thm.main.rscr}), or further insist on identical cyclic reservoir blocks (which may be advantageous from a hardware implementation point of view), in which case a Simple Multi-Cycle Reservoir of order greater than 2 may be needed (Theorem~\ref{thm.main.pr}). 
 {The details of each approximation step will be fleshed-out and summarized in more technical manner in Section~\ref{sec:summary}.}

\begin{center}
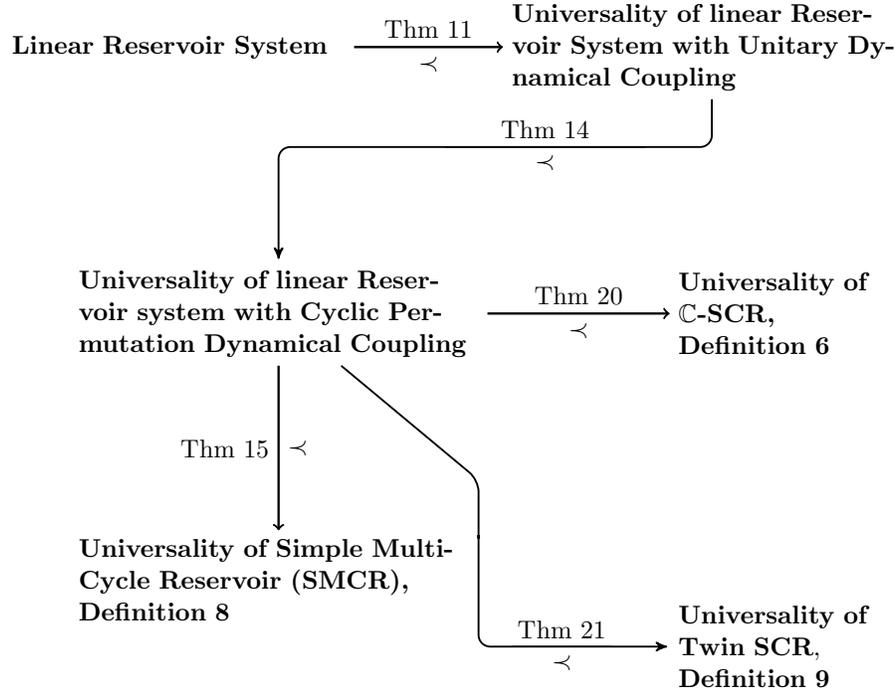
\begin{figure}[ht!]
\centering
%  \includestandalone[width=0.8\textwidth]{tikz/full_flow_simple}%     without .tex extension
  % or use \input{mytikz}
  \begin{adjustbox}{width=0.8\textwidth}
  \begin{tikzpicture}[
    pre/.style={=stealth',semithick},
    post/.style={->,shorten >=1pt,>=stealth',thick}
    ]

 % Nodes
 \node[black, text width=5cm] (orig) at (-7,3) {\textbf{Linear Reservoir System}};
 
 \node[black, text width=6cm](unit) at (1,3) {\textbf{Universality of linear Reservoir System with Unitary Dynamical Coupling}};
 
 \node[black, text width=6cm] (perm) at (-5.5,-1){\textbf{Universality of linear Reservoir system with Cyclic Permutation Dynamical Coupling}};

 \node[black, text width=6cm] (permpm1) at (-5.5,-5) {\textbf{Universality of Simple Multi-Cycle Reservoir (SMCR),} \\ \textbf{Definition~\ref{def.smcr}}};

 \node[black, text width=3cm] (cscr) at (2,-1) {\textbf{Universality of $\mathbb{C}$-SCR,} \\ \textbf{Definition~\ref{def.rc}}};

 \node[black, text width=3cm] (rscr) at (2,-11 + 5) {\textbf{Universality of Twin SCR}, \\ \textbf{Definition~\ref{def.twin}}};
 
 % Arrows
 \path [->,thick] (orig) edge node [above] {Thm \ref{thm.dilation}} (unit);
 \path [->,thick] (orig) edge node [below] {$\prec$} (unit);

 % \draw[post,rounded corners=5pt] (unit)-|(below_orig_perm);
 % \draw[post,rounded corners=5pt] (below_orig_perm)|-(perm);
 
 \draw[post,rounded corners=5pt] (unit) -- node[below] {} ++(0,-2 + 0.5) -| node[below] {} ++(-6.5 ,-0.5) -- (perm);
 \draw[-, ultra thin] (-3,1 + 0.5) --node [below] {$\prec$}  (0, 1 + 0.5);
 \draw[-, ultra thin] (-3, 1 + 0.5) --node [above] {Thm \ref{thm.to.permutation}}  (0, 1+ 0.5);
 % \draw[post,rounded corners=5pt] (orig)|-(rscr); % final arrow

 % Arrows
 \path [->,thick] (perm) edge node [left] { Thm \ref{thm.main.pr}} (permpm1);
 \path [->,thick] (perm) edge node [right] {$\prec$} (permpm1);

 \path [->,thick] (perm) edge node [above] {Thm \ref{thm.main.cscr}} (cscr);
 \path [->,thick] (perm) edge node [below] {$\prec$} (cscr);
 
 % \path [->] (cscr) edge node [left] {Thm \ref{thm.main.2}} (rscr);
 % \path [->] (cscr) edge node [right] {$\prec$} (rscr);

 \draw[post,rounded corners=5pt] (perm) -- node[below] {} ++(3,-3.5 + 1) |- node[above] {} ++(0 ,-2  + 1) |-(rscr);
 \draw[-, ultra thin] (-2,-11+ 5) --node [below] {$\prec$}  (-0.5 ,-11+ 5);
 \draw[-, ultra thin] (-2,-11 + 5) --node [above] {Thm \ref{thm.main.rscr}}  (-0.5 ,-11+ 5);
 
 % \draw[->] (2.5,3) --node [above] {Prop. \ref{prop.perturb.unitary}} (4,3);
 % \draw[->] (2.5,3) --node [above] {Prop. \ref{prop.perturb.unitary}} (4,3);

\end{tikzpicture}
\end{adjustbox}
  \caption{Flow of the main results of this paper}
  \label{fig:fullpaper_simple}
\end{figure}
\end{center}
 
Crucially, these results enable us to connect to the work of  {\cite{Grigoryeva2018}(Corollary 11)} proving that linear reservoir systems with polynomial readouts are universal: any time-invariant fading memory filter can be approximated by a linear reservoir system. 
Since we show that our extremely constrained 
reservoir cyclic structures can approximate to arbitrary precision any linear reservoir system with continuous (hence also polynomial) readout, we end up with a rather surprising conclusion: 
Any time-invariant fading memory filter can be approximated to arbitrary precision by (i) a $\mathbb{C}$-SCR, (ii) a SMCR, and (iii) a Twin SCR. Yet, besides the reservoir size, they all have a {\em single degree of freedom in the reservoir dynamic coupling} $W$ - the cycle weight parameter $\lambda$ (spectral radius of $W$). To summarize: $\mathbb{C}$-SCR, SMCR,  and Twin SCR are all universal reservoir structures with a single tunable reservoir weight parameter.

Last, but not least, all our poofs are constructive. Hence, given any linear reservoir structure, we can explicitly construct its simple approximator with a single tunable parameter, which can be of vital importance in hardware implementations of reservoir systems \cite{bienstman2017,Appeltant2011InformationPU,NTT_cyclic_RC}.
}

\section{Unitary Dilation of Linear Reservoirs} \label{sec:unitary}

% \comment{the $T$, $S$, and $U$ are swapped in the next section making it a bit confusing, I'll swap it later. Here $T$ should be $W$, and $S$ is $T$. Better if we change the subsequent section according to the dilation literature...}

 {As an intermediate step towards constructing $\mathbb{C}$-SCR approximators we first establish approximation capabilities of linear reservoir systems with unitary dynamic coupling $W$.  
In particular, using the Dilation Theory we show that given any linear reservoir system and $\epsilon > 0$, we can construct an $\epsilon$-close linear reservoir system with unitary $W$.} We begin by introducing notions from the Dilation Theory. 

Dilation Theory is a branch in operator theory that seeks to embed a linear operator in another with {desirable}
properties. Given an $n\times n$ matrix $W\in \mathbb{M}_{n\times n}$ over $\mathbb{C}$ 
with  {operator} norm $\|W\|\leq 1$, \cite{Halmos1950} observed that one can embed $W$ in a unitary operator 
\[\tilde U=\begin{bmatrix}
W & D_{W^*} \\
D_W & -W^*
\end{bmatrix}\]
where $D_W=(I-W^*W)^{1/2}$ and $D_{W^*}=(I-WW^*)^{1/2}$.  {Here, $\|W\|\leq 1$ is necessary so that $I-WW^*$ and $I-W^*W$ are positive semidefinite. We notice that in this construction, however,} the information on $W$ is lost when we try to compute powers of $\tilde U$, as $\tilde U^2$ no longer has $W^2$ in the upper left block.
%corner. 
The seminal work of \cite{Nagy1953} (see also, \cite{Schaeffer1955}) proved that one can find a unitary  {operator} $U$ on an infinite-dimensional Hilbert space $\mathcal{H}$ and an isometric embedding $J:\mathbb{C}^n\to\mathcal{H}$ such that $W^k = J^* U^k J$ for all $k\in\mathbb{Z}$. 
% {I PUT $\tilde U$ ABOVE TO DISTINGUISH THE FIRST SUGGESTION FOR THE UNITARY MATRIX FROM THE ONE WE ARE GOING TO USE.}
% {PLEASE CHECK: SINCE $U$ IS UNITARY AND $S$ ISOMETRY, SHOULDN'T WE HAVE THE DETERMINANT (NORM) OF $W$ STRICTLY EQUAL TO 1 (VOLUME PRESERVING)? I CANNOT SEE HOW THIS CAN HOLD FOR A CONTRACTION?}
% {
%Take e.g. $T=1/2$, $U=\begin{bmatrix}
%    1/2 & 0 & \sqrt{3}/2 \\
%    \sqrt{3}/2 & 0 & -1/2 \\
%    0 & 1 & 0
%\end{bmatrix}$
%$S$ maps a copy of C to the first coordinate. $T=S^* U S$.
%changed $S$ to $J$
%}

The space $\mathcal{H}$ in Sz.Nagy's dilation is necessarily infinite-dimensional in general. However, if we only require $W^k = J^* U^k J$ for all $1\leq k\leq N$, a result of \cite{Egervary1954} shows that this can be achieved with the following $(N+1)n\times (N+1)n$ unitary matrix $U$:

\[U=\begin{bmatrix}
W   & 0  & 0 & \cdots & \cdots      & 0   & D_{W^*} \\
D_W & 0  & 0 & \cdots & \cdots      & 0   & -W^* \\
 0   & I &  0     & \cdots   & \cdots & 0 & 0 \\
 \vdots   & 0  & \ddots&    &   &  \vdots & \vdots\\
  \vdots   &  \vdots & &  \ddots  &   &  \vdots & \vdots\\
  \vdots   &   & &    & \ddots  &  0 & \vdots\\

 0   & \cdots  &  \cdots & \cdots &  0 & I  & 0 
\end{bmatrix}.\]
 {The embedding isometry will be now realised by an $(N+1)n \times n$ matrix $J$ over $\mathbb{C}$.
For more background on Dilation Theory we refer interested readers to \cite{PaulsenBook}}. 

 {It will often be the case that when we transform one reservoir system into another the corresponding readout mappings will be closely related to each other. In particular:}

 {
\begin{definition}
    Given two functions $h,g$ sharing the same domain $D \subset \mathbb{K}^n$, where $\mathbb{K}^n$ is a field, we say that \textbf{$g$ is $h$ with linearly transformed domain} if there exists a linear transformation over $\mathbb{K}^n$ with the corresponding matrix $A$ such that $g(\x) = h(A\x)$ for all $\x \in D$.
\end{definition}
}
 {We now demonstrate how to use the dilation technique to obtain $\epsilon$-close approximating reservoir systems with a unitary matrix $W$. }
%We begin by a definition that describes the adaptation of readout maps in the subsequent discussions.

% In Section \ref{sec:unitary} we first show that any linear reservoir system can the approximated by a linear reservoir system with unitary dynamic coupling matrix.

\begin{theorem}\label{thm.dilation} Let $R=(W,V,h)$ be a reservoir system defined by contraction $W$ with $\norm{W}=:\lambda\in (0,1)$ and satisfying the assumptions of Definition~\ref{def.lrc}. Given $\epsilon>0$, there exists a reservoir system $R'=(W', V', h')$ that is $\epsilon$-close to $R$ with $W'=\lambda U$ for a unitary $U$. Moreover, $h'$ is $h$ with linearly transformed domain.
\end{theorem}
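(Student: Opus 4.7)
The plan is to dilate $W$ to a scaled unitary on a larger ambient space using the finite-dimensional dilation quoted above, and then exploit the geometric decay of $\norm{W^n}$ to show that truncating the dilation identity at a sufficiently high order $N$ introduces only a small error in the state trajectory. The free parameter $N$ will be tuned at the end to meet the prescribed tolerance $\epsilon$.

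First I would normalise: write $A := W/\lambda$, so that $\norm{A} \le 1$. For each $N \in \mathbb{N}$, Egerv\'ary's construction yields a unitary $U \in \mathbb{M}_{(N+1)n \times (N+1)n}(\mathbb{C})$ together with the canonical isometric embedding $J \in \mathbb{M}_{(N+1)n \times n}(\mathbb{C})$, i.e.\ $J^* J = I_n$, such that $A^k = J^* U^k J$ for $1 \le k \le N$. Multiplying by $\lambda^k$ yields $W^k = J^*(\lambda U)^k J$ for $1 \le k \le N$, and this identity holds trivially at $k = 0$ as well, since $J^*J = I_n$.

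Next, I would define the approximant $R' := (W', V', h')$ by $W' := \lambda U$, $V' := JV$, and $h'(\Bu) := h(J^* \Bu)$; note that $h'$ is manifestly $h$ with linearly transformed domain, the transformation being $J^*$. The solution of $R'$ is $\Bx'_t = \sum_{n \ge 0} (\lambda U)^n JV \Bc_{t-n}$, and the dilation identity combined with the linearity of $J^*$ collapses the discrepancy to the tail,
\[
\Bx_t - J^*\Bx'_t \;=\; \sum_{n > N} \bigl(W^n - J^*(\lambda U)^n J\bigr) V \Bc_{t-n},
\]
since the first $N+1$ summands cancel by construction. Using $\norm{W^n} \le \lambda^n$, the trivial estimate $\norm{J^*(\lambda U)^n J} \le \lambda^n$ (valid because $J$ is an isometry and $U$ unitary), and the uniform bound $\norm{\Bc_t} \le M$, the tail is controlled by
\[
\norm{\Bx_t - J^*\Bx'_t} \;\le\; \frac{2\,\lambda^{N+1}\,\norm{V}\,M}{1 - \lambda},
\]
uniformly in $t$.

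Finally, since $\lambda < 1$ both $\{\Bx_t\}_{t\in\mathbb{Z}_-}$ and $\{J^*\Bx'_t\}_{t\in\mathbb{Z}_-}$ lie in the closed ball of radius $\norm{V}M/(1-\lambda)$ in $\mathbb{C}^n$, which is compact. Continuity of $h$ on $\mathbb{C}^n$ yields uniform continuity on this ball, so the given $\epsilon > 0$ determines $\delta > 0$ with $\norm{\Bu - \Bv} < \delta \Rightarrow \norm{h(\Bu) - h(\Bv)} < \epsilon$. Choosing $N$ large enough that the geometric tail bound above is strictly below $\delta$ gives $\norm{\By_t - \By'_t} = \norm{h(\Bx_t) - h(J^*\Bx'_t)} < \epsilon$ for every $t \in \mathbb{Z}_-$, completing the proof. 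The only delicate point is handling the $n > N$ tail where the dilation identity fails, but this is disposed of uniformly in $t$ by the fact that both $\norm{W^n}$ and $\norm{J^*(\lambda U)^n J}$ are dominated by $\lambda^n$, and $N$ can always be pushed high enough to absorb the resulting geometric remainder.
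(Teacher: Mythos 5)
Your proposal is correct and follows essentially the same route as the paper's own proof: normalise $W$ by $\lambda$, apply Egerv\'ary's finite-order unitary dilation, set $V' = JV$ and $h' = h\circ J^*$, and control the tail $n > N$ using the fact that both $\norm{W^n}$ and the compressed powers $\norm{J^*(\lambda U)^n J}$ are bounded by $\lambda^n$. The only cosmetic difference is that you fix $N$ after deriving the geometric tail bound while the paper fixes it up front, and you phrase uniform continuity on an explicit closed ball rather than on the state space $X$; neither changes the argument.
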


% \dilation

% \begin{theorem}\label{thm.dilation} Let $R=(W,V,h)$ be a reservoir system defined by contraction $W$ with $\norm{W}=:\lambda\in (0,1)$. Given $\epsilon>0$, there exists a reservoir system $R'=(W', V', h')$ that is $\epsilon$-close to $R$ with $W'=\lambda U$ for a unitary $U$.
% \end{theorem}

\begin{proof} The uniform boundedness of input stream and contractiveness of $W$ imply that the state space $X\subseteq \mathbb{C}^n$ is closed and bounded, hence compact. The continuous readout map $h$ is therefore uniformly continuous on the state space $X$. 
%\comment{Add $h$ is uniformly continuous here?: Assumption 2 also implies: the state space $X$ is bounded (closed?) thus compact. So $h$ is a continuous map on a compact set hence uniformly continuous. I.e. input is uniformly bounded, the image onto the state space is compact, hence $h$ is uniformly continuous. We did make the assumption that $h$ is continuous tho, which I think is mild.}
By the uniform continuity of $h$, for {any} $\epsilon >0$, there exists $\delta>0$ such that  {for any $\x,\x' \in X$ with $\|\Bx-\Bx'\|<\delta$, we have $\|h(\Bx)-h(\Bx')\|<\epsilon$}. Let $\lambda = \|W\|$ and let $M$ denote the uniform bound of  $\{\Bc_t\}$ such that $\Bc_t\leq M$ for all $t$. Since $\lambda <1$, we can choose $N$, such that:
 {
\[
2 M \|V\| \sum_{t>N} \|W\|^t  = 2 M \|V\|  \frac{\lambda^{N+1}}{1-\lambda} < \delta.
\]
}
Let $W_1 = W/\lambda$ and $n'=(N+1) \cdot n$. 
We have $\|W_1\|=1$ and therefore by Egerv\'{a}ry's dilation, there exists a unitary $n'\times n'$ matrix $U$  such that for all $1\leq k\leq N$, we have: 
\begin{align*}
    W_1^k = J^* U^k J , 
\end{align*}
% \[W_1^k = T^* U^k T.\]
where $J:\mathbb{C}^{n} \hookrightarrow \mathbb{C}^{n'}$ is the canonical embedding 
% {[SHOULDN'T IT BE $S:\mathbb{C}^{n} \hookrightarrow \mathbb{C}^{n'}$?]}
% {Yes I think by dilation section it should be reversed?}
of $\mathbb{C}^n$ onto the first $n$-coordinates of $\mathbb{C}^{n'}$. Let $W'=\lambda U$, then it follows immediately that: 
\[
W^k = \lambda^k W_1^k = J^* \left(\lambda U\right)^k J = J^* \left(W'\right)^k J.
\]
Define an $n'\times n$ matrix by:
\begin{equation}
\label{eqn.unitary.v'}
V'=\begin{bmatrix}
V  \\
0
\end{bmatrix},
\end{equation}
 {and the map $h':\mathbb{C}^{n'} \to \mathbb{C}^d$ given by $h'=h\circ J^*$, which is:
\begin{align}
\label{eqn.unitary.h'}
h'(x_1, x_2, \cdots,x_n,\cdots,x_{n'}) &:= h(x_1, x_2, \cdots, x_n).
%&= h\circ J^*(x_1, x_2, \cdots,x_n,\cdots,x_{n'}) \nonumber 
\end{align} }

We now show that the reservoir system $R'=(W', V', h')$ is $\epsilon$-close to $R=(W,V,h)$. For any input stream $\{\Bc_t\}_{t\in\mathbb{Z}_-}$,  {the states under the reservoir system $R'$ is given by: 
\begin{align}
\label{eqn.unitary.state}
%{\Bx_t} &= \sum_{k\geq 0} W^k V \Bc_{t-k} \nonumber \\
{\Bx_t'} &= \sum_{k\geq 0} \left(W'\right)^k V' \Bc_{t-k} 
\end{align}
}
For each $k \geq 0$, we denote the upper left $n\times n$ block of $\left(W'\right)^k$ by $A_k$. In other words:
\[\left(W'\right)^k = \begin{bmatrix}
A_k & * \\
* & *
\end{bmatrix}.\]
This splits into two cases. For each $0\leq k\leq N$, we have $A_k=W^k$ by construction of $W'$. Otherwise, for $k>N$, the power $k$ is beyond the dilation power bound and we no longer have $A_k=W^k$ in general. Nevertheless, since $A_k$ is a submatrix of $W^k$, its operator norm is bounded from above:
\[\norm{A_k}\leq \norm{W^k}\leq \norm{W'}^k = \lambda^k.\]

By Equation \eqref{eqn.unitary.v'}, we have $V' \Bc_{t-k}=\begin{bmatrix}
V \Bc_{t-k} \\
0
\end{bmatrix}$ and the state ${\Bx_t'}$ of $R'$ from Equation \eqref{eqn.unitary.state} thus becomes: %construction of $V'$ in

\begin{align*}
{\Bx_t'} &= \sum_{k\geq 0} \left(W'\right)^k V' \Bc_{t-k} \\
&= \sum_{k\geq 0} \begin{bmatrix}
A_k & * \\
* & *
\end{bmatrix}  \begin{bmatrix}
V \Bc_{t-k} \\
0
\end{bmatrix}\\
&= \sum_{k=0}^N \begin{bmatrix}
W^k & * \\
* & *
\end{bmatrix}  \begin{bmatrix}
V \Bc_{t-k} \\
0
\end{bmatrix}  + \sum_{k>N} \begin{bmatrix}
A_k & * \\
* & *
\end{bmatrix}  \begin{bmatrix}
V \Bc_{t-k} \\
0
\end{bmatrix} \\
&= \begin{bmatrix}
\sum_{k=0}^N W^k V \Bc_{t-k} \\
*
\end{bmatrix} + \begin{bmatrix}
\sum_{k>N} A_k V \Bc_{t-k} \\
*
\end{bmatrix}.
\end{align*}

Let $J^*({\Bx_t'})$ be the first $n$-coordinates of ${\Bx_t'}$. 
% {[SEE THE COMMENT ABOUT $S$ ABOVE]}
We have 
%\comment{ADDED UNDERBRACE FOR CLARIFY, NOT SURE IF NEEDED}
\[J^*({\Bx_t'})= \sum_{k=0}^N W^k V \Bc_{t-k}+\sum_{k>N} A_k V \Bc_{t-k}.\]
Comparing  {this with the state generated by the reservoir system $R$, which is given by:} 
\[{\Bx_t} = \sum_{k\geq 0} W^k V \Bc_{t-k} = \sum_{k=0}^N W^k V \Bc_{t-k} + \sum_{k>N} W^k V \Bc_{t-k},\]
it follows immediately that:
\[
\norm{J^*({\Bx_t'})-{\Bx_t}} = \norm{0+\sum_{k>N} \left(A_k - W^k\right) V \Bc_{t - k}} \leq 
\sum_{k>N} \left(\norm{A_k}+\norm{W^k}\right) \norm{V} M.
\]
Notice we have $\|W^k\|\leq \|W\|^k=\lambda^k$ and we also showed $\norm{A_k}\leq \lambda^k$, and therefore:
% Since $A_k$ is a corner of $W'^k$ \comment{and since $S$ is co-isometry (or the canonical embedding) so it preserves operator norm? for $k>N$ do we have $A_k = S^* W'^k S$? Re: for $k>N$, $A_k = S^* W'^k S$, and we know $\|S\|=1$ since it is an co-isometry.}, we have \comment{$\|W'^k\| = \|W^k\|$ still true for $k > N$? is this just by construction that $W' = \lambda U$? Re: you're right, $\|W'^k\|$ may not be $\|W^k\|$. We are simply using $\|W'^k\|=\lambda^k \|U^k\|=\lambda^k$ since $U$ is unitary. Add a couple more sentences for the CS community ? such as: The operator norm of the upper RHS corner $A_k$ for $k>N$ is bounded above (why) by the operator norm of unitary $W'^k$, hence..}
% \[\|A_k\| \leq \|W'^k\| = \lambda^k\]
\[\|J^*({\Bx_t'})-{\Bx_t}\|\leq  \sum_{k>N} 2\lambda^k \|V\| M < \delta. \]
By Equation \eqref{eqn.unitary.h'} $h'({\Bx_t})=h(J^*({\Bx_t}))$ and by uniform continuity of $h$ we have:
\[\|{\By_t}-{\By_t'}\| = \|h({\Bx_t}) - h(J^*({\Bx_t}))\|<\epsilon. \]
This finishes the proof. 
\end{proof}

\section{From Unitary to Permutation State Coupling}
\label{sec:permutation_univ}

In this section, we show that for any reservoir system with unitary state coupling, we can construct an equivalent reservoir system with  {a full cyclic coupling} of state units weighted by a single common connection weight value. To that end, we first show that matrix similarity of dynamical coupling implies reservoir equivalence.  
%\commentgreen{Matrix similarity of dynamical coupling implies reservoir equiv. }

% \begin{proposition} \comment{Marketing ver.}
% Let $R=(W,V,h)$ be a reservoir system defined by dynamical coupling matrix $W$. Let $S$ be an invertible matrix such that $W' := S^{-1} W S$ is a matrix similar to $W$ Then there exists an input map $V'$ and readout $h'$ such that reservoir system $R':=(W', V', h')$, that is equivalent to $R$. 
% \end{proposition}

\begin{proposition}\label{prop.similar} Let $W$ be a contraction and let $R=(W,V,h)$ denote the corresponding reservoir system. Suppose $S$ is an invertible matrix such that $W'=S^{-1} W S$ and $\|W'\|<1$. Then there exists a reservoir system $R'=(W', V', h')$ that is equivalent to $R$. 
\end{proposition}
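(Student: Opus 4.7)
The plan is to exploit the change-of-basis $S$ directly: define the new system by $V' := S^{-1} V$ and $h' := h \circ S$ (i.e., $h'$ is $h$ with linearly transformed domain via $S$), and then verify that the trajectories of $R'$ are just those of $R$ expressed in the new coordinates. Since we are told $\|W'\| < 1$, the contractivity assumption of Definition~\ref{def.lrc} is preserved and the solution formula for $R'$ converges absolutely for any uniformly bounded input stream, so all series manipulations below are legitimate.

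First I would compute the closed-form state trajectory of $R'$ under an arbitrary input $\{\Bc_t\}_{t \in \mathbb{Z}_-}$, namely
\[
\Bx'_t \;=\; \sum_{j \geq 0} (W')^j V' \Bc_{t-j} \;=\; \sum_{j \geq 0} S^{-1} W^j S \cdot S^{-1} V \Bc_{t-j} \;=\; S^{-1} \sum_{j \geq 0} W^j V \Bc_{t-j} \;=\; S^{-1} \Bx_t ,
\]
where the second equality uses the key telescoping identity $(S^{-1} W S)^j = S^{-1} W^j S$, which follows by straightforward induction on $j$. This shows the internal states of the two systems are related by the invertible linear map $S$.

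Next I would check that the readouts agree. Applying $h'$ to $\Bx'_t$ yields
\[
\By'_t \;=\; h'(\Bx'_t) \;=\; h(S \Bx'_t) \;=\; h(S S^{-1} \Bx_t) \;=\; h(\Bx_t) \;=\; \By_t ,
\]
for every $t \in \mathbb{Z}_-$. By the equivalence part of the Definition in Section~2, this is exactly what it means for $R$ and $R'$ to be equivalent.

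Since $h$ is continuous and $S$ is a (continuous) linear map, $h' = h \circ S$ is continuous, so $R' = (W', V', h')$ qualifies as a linear reservoir system in the sense of Definition~\ref{def.lrc}. There is no real obstacle here; the proof is essentially a bookkeeping exercise, and the only thing to be careful about is justifying the interchange of $S^{-1}$ with the infinite sum, which is immediate from $\|W\| < 1$ (equivalently, the absolute convergence of the state series under the uniform boundedness assumption on $\{\Bc_t\}$).
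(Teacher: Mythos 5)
Your proposal is correct and follows essentially the same route as the paper: both define $V' = S^{-1}V$ and $h'(\Bx) = h(S\Bx)$, then use the telescoping identity $(S^{-1}WS)^j = S^{-1}W^jS$ to pull $S^{-1}$ out of the state series and cancel it against the $S$ inside the readout. Your additional remarks on convergence and continuity are harmless elaborations of points the paper leaves implicit.
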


% {IT WOULD BE BETTER TO RESERVE $S$ FOR THE EMBEDDING IN DILATION. CAN WE USE A DIFFERENT SYMBOL FOR THIS MATRIX?} \comment{Since $S$ here serves the same purpose as above should we keep the same symbol?}
% {I see Peter's point. Changed previous $S$ in dilation to $J$}
\begin{proof} 
 {
Let $V'= S^{-1} V$ and $h'(\Bx)=h(S\Bx)$. Given inputs $\left\{\Bc_t\right\}_{t\in \mathbb{Z}_-}$, we have the solutions to the systems $R=(W,V,h)$ given by
\begin{align*}
        {\By_t} &= h\left(\sum_{n\geq 0} W^n V \Bc_{t-n}\right). 
        %\ \ \hbox{and}\\
        %{\By_t'} &= h'\left(\sum_{n\geq 0} \left(W'\right)^n V' \Bc_{t-n}\right),
\end{align*}
Similarly the solutions to $R'=(W',V',h')$ is given by:
%respectively.}
%Therefore, we have:
\begin{align*}
    {\By_t'} 
    &= h'\left(\sum_{n\geq 0} \left(W'\right)^n V' \Bc_{t-n}\right) \\
    &= h\left(S\cdot \sum_{n\geq 0} (S^{-1} W S)^n {S^{-1}} V \Bc_{t-n}\right) \\
    %&= h\left(S\cdot \sum_{n\geq 0} S^{-1} W^n S \cdot {S^{-1}} V \Bc_{t-n}\right) \\
    &= h\left(\sum_{n\geq 0} W^n V \Bc_{t-n}\right) =  {\By_t}.
\end{align*}
This proves that the two systems are equivalent. 
}
\end{proof}

We now show that for any given unitary state coupling we can always find a full-cycle permutation that is close to it to arbitrary precision. This is done by perturbing a given unitary matrix to one that is unitarily equivalent to a cyclic permutation. It is well known that the eigenvalues of unitary matrices lie on the unit circle $\mathbb{T}$ in $\mathbb{C}$, and the eigenvalues of cyclic permutations are the roots of unities in $\mathbb{C}$. Given a unitary matrix $U$, we can therefore first perturb its eigenvalues to a subset of eigenvalues of a  {cyclic} permutation matrix.  {The remaining roots of unity (eigenvalues of the cyclic permutation) not covered by the previous operation 
%from the eigenvalues of the cyclic permutation 
are then filled in using direct sum with the diagonal matrix} consisting of the missing eigenvalues. 
%The above discussion is summarized in the following results.

% \footnote{ {removed the perturbation to permutation since a full-cycle is a permutation already}}

\begin{proposition}\label{prop.perturb.unitary} Let $U$ be an $n\times n$ unitary matrix and $\delta>0$ be an arbitrarily small positive number. There exists an $n_1\times n_1$ matrix $A$ with $n_1 > n$ that is unitarily equivalent to a full-cycle permutation, and an $(n_1-n)\times (n_1-n)$ diagonal matrix $D$ such that:
\[
\norm{A-\begin{bmatrix} U & 0 \\ 0 & D\end{bmatrix}}<\delta .
\] 
\end{proposition}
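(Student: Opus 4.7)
The plan is to combine the spectral theorem for unitary matrices with the observation that a full-cycle permutation is, up to unitary conjugation, the diagonal matrix listing all $n_1$-th roots of unity. First I would diagonalize $U$ via the spectral theorem as $U = Q^* \Lambda Q$, with $Q$ unitary and $\Lambda = \mathrm{diag}(\lambda_1,\ldots,\lambda_n)$, each $\lambda_j \in \mathbb{T}$. The $n_1 \times n_1$ full-cycle permutation $P_{n_1}$ has simple spectrum consisting of the $n_1$-th roots of unity $\omega^k := e^{2\pi i k/n_1}$, $k=0,\ldots,n_1-1$, and is diagonalized by the discrete Fourier matrix. Consequently any $n_1 \times n_1$ diagonal matrix whose entries are precisely the $n_1$-th roots of unity (each appearing once) is unitarily equivalent to $P_{n_1}$.

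The next step is to choose $n_1$ large enough that, on the unit circle, the $n_1$-th roots of unity form a sufficiently dense net so that one can assign to each $\lambda_j$ a \emph{distinct} $n_1$-th root $\mu_j$ with $|\mu_j - \lambda_j| < \delta$. Taking $n_1$ of order $n/\delta$ suffices: every $\delta$-arc of $\mathbb{T}$ then contains at least $n$ of the $\omega^k$, so a greedy assignment succeeds (process the $\lambda_j$ in any order; at step $j$ at most $n-1$ roots have been used yet at least $n$ roots remain in the $\delta$-ball around $\lambda_j$, so an unused one is always available). Let $\nu_1,\ldots,\nu_{n_1-n}$ denote the roots not selected, and define $D := \mathrm{diag}(\nu_1,\ldots,\nu_{n_1-n})$ together with $\tilde\Lambda := \mathrm{diag}(\mu_1,\ldots,\mu_n,\nu_1,\ldots,\nu_{n_1-n})$. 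By construction the spectrum of $\tilde\Lambda$ is exactly $\{\omega^k\}_{k=0}^{n_1-1}$, hence $\tilde\Lambda$ is unitarily equivalent to $P_{n_1}$.

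Finally, I would set $\tilde Q := \mathrm{diag}(Q, I_{n_1-n})$ and $A := \tilde Q^* \tilde\Lambda \tilde Q$. Then $A$ is unitarily equivalent to $\tilde\Lambda$, and hence to $P_{n_1}$. Expanding the block structure yields
\[
A - \begin{bmatrix} U & 0 \\ 0 & D \end{bmatrix} = \tilde Q^* \, \mathrm{diag}(\mu_1 - \lambda_1,\ldots,\mu_n - \lambda_n,\, 0,\ldots,0)\, \tilde Q,
\]
which is unitarily conjugate to a diagonal matrix and therefore has operator norm $\max_j |\mu_j - \lambda_j| < \delta$, giving the desired bound.

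The principal obstacle is the distinctness requirement on the $\mu_j$: if $U$ has clustered or repeated eigenvalues, picking the nearest $n_1$-th root for each $\lambda_j$ independently would produce collisions and destroy the simple-spectrum property that is essential for $\tilde\Lambda$ to be unitarily equivalent to the full-cycle permutation $P_{n_1}$. This is precisely what forces $n_1$ to grow with both $n$ and $1/\delta$; the density estimate above ensures that the greedy matching always avoids collisions, so both the distinctness and the $\delta$-closeness can be achieved simultaneously.
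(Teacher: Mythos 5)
Your proposal is correct and follows essentially the same route as the paper: diagonalize $U$, perturb its eigenvalues to \emph{distinct} $n_1$-th roots of unity via a density/greedy counting argument, complete the spectrum with a diagonal block $D$ of the unused roots so that the full spectrum is exactly $\mathcal{R}_{n_1}$, and conjugate back to get the norm bound. The only cosmetic difference is that the paper fixes $n_1=\ell_0 n$ and counts admissible integers $b_j$ in an interval of length $n$, whereas you phrase the same pigeonhole step as ``every $\delta$-arc contains at least $n$ roots.''
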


\begin{proof} 
Given an integer $\ell \ge 1$, the complete set of $\ell^{th}$ roots of unity, denoted by $\mathcal{R}_\ell=\{e^{i\frac{2k\pi}{\ell}}: 0\leq k\leq \ell-1\} \subset \mathbb{T}$, is a collection of uniformly positioned points along the complex circle $\mathbb{T}$. 
% {[WE ARE USING $m$ TO DENOTE INPUT DIMENSIONALITY, SHOULD USE A DIFFERENT SYMBOL FOR CONSISTENCY]} % {Done. used $\ell$ instead.}
It is well known from elementary matrix analysis that an $\ell\times \ell$ full-cycle permutation matrix is unitarily equivalent to a diagonal matrix whose diagonal entries consist of the complete set  {$\mathcal{R}_\ell$} of $\ell^{th}$ roots of unity. 
Therefore, an $\ell\times \ell$ matrix $X$ is unitarily equivalent to a full-cycle permutation if and only if its eigenvalues are precisely $\mathcal{R}_\ell$.

Let $U$ be a fixed $n\times n$ unitary matrix $U$ and denote its 
eigenvalues by $\{\omega_1, \cdots,\omega_n\}$. Since $U$ is unitary, $|\omega_j|=1$ and thus $w_j \in \mathbb{T}$ for all $ j = 1,\ldots, n$.
We write $w_j=e^{2\pi i a_j}$ for $a_j\in [0,1)$. For any $\delta>0$, pick an integer $\ell_0 > 0$ such that 
\[\left|1-e^{\frac{\pi i}{\ell_0}}\right|<\delta.\]
We claim that there exists distinct integers $b_1, \cdots, b_n$, $0\leq b_j < \ell_0 \cdot n$ such that 
\[\left|a_j - \frac{b_j}{\ell_0 \cdot n}\right|\leq \frac{1}{2\ell_0}\]
Indeed, for each $j$, 
%  {[YOU MEAN: for each $j$?]}
the range for allowable $b_j$ is given by 
\[a_j \ell_0 n - \frac{n}{2} < b_j < a_j \ell_0 n + \frac{n}{2}.\]
This is an interval of length $n$, and thus there are precisely $n$ possible choices of $b_j$. Since we need to pick $n$ of $b_j$'s, it is always possible to  {make $b_1, \cdots, b_n$ 
%$b_j$ 
distinct}. 
 {We thus obtain distinct integers
%Now suppose 
$\left\{b_j\right\}_{j=1}^n$ 
%are distinct integers 
satisfying} 
\[\left|a_j - \frac{b_j}{\ell_0 \cdot n}\right|<\frac{1}{2\ell_0}.\]
We have that: % divide the second element out, since that is in $\mathbb{T}$ it has norm 1 and doesn't change. Then swap them in abs to get the second term. In this case the first equal should be leq? -- if so it doesn't affect the result anyway
\[\left|\omega_j - e^{2\pi i \frac{b_j}{\ell_0 \cdot n}}\right| = \left|1 - e^{2\pi i \left|a_j - \frac{b_j}{\ell_0 \cdot n}\right|}\right| \leq  \left|1 - e^{2\pi i \frac{1}{2\ell_0}}\right| < \delta.\]

Let $n_1=\ell_0 \cdot n$. Consider two diagonal matrices $D_U :=\operatorname{diag}\left\{\omega_1, \cdots, \omega_n\right\}$ and $D_1 := \operatorname{diag}\left\{e^{2\pi i \frac{b_1}{n_1}}, \cdots, e^{2\pi i \frac{b_n}{n_1}}\right\}$. Then:
\begin{align*}
    \norm{D_U-D_1} = \max_{j \in\left\{ 1,\ldots, n\right\}} \left|\omega_j - e^{2\pi i\frac{b_j}{n_1}}\right| < \delta. 
\end{align*}

%Therefore, we can pick $n$ \textbf{distinct} integers $\{k_j:0\leq k_j\leq N-1\}_{j=1}^n$ such that:
%\[|w_j - e^{i\frac{2\pi}{N} \cdot k_j}|<\delta.\]
%Let $D_U :=\operatorname{diag}\left\{\omega_1, \cdots, \omega_n\right\}$ and $D_1 := \operatorname{diag}\left\{e^{i\frac{2 \pi}{N} \cdot k_1}, \cdots, e^{i\frac{2\pi}{N}\cdot k_n }\right\}$. Then:
%\begin{align*}
%    \|D_U-D_1\| = \max_{j \in\left\{ 1,\ldots, n\right\}} |\omega_j - e^{i\frac{2 \pi}{N}\cdot k_j }| < \delta. 
%\end{align*}
Let $S$ be unitary such that $U = S^*D_U S$. Let $U_1 : =S^*D_1 S$, 
% {[WE USE $V$ TO DENOTE INPUT-TO-STATE COUPLING], PLEASE USE A DIFFERENT SYMBOL]}
% {Done, changed to $U_1$. Should be $U = S^*D_U S$ and $U_1 : =S^*D_1 S$ instead?}
% {Re Simon: yup, changed}
% {Do we change this $S$ to $J$ as well?}
then by construction we have $\|U-U_1\|<\delta$. Moreover, the set of eigenvalues of $U_1$, given by $\sigma_{U_1}=\left\{e^{\frac{2\pi i b_j }{n_1}}\right\}_{j=1}^{n} \subset \mathcal{R}_{n_1}$, consists of distinct $n_1^{th}$ roots of unity. Whilst this is not a \textit{complete} set of roots of unity $\mathcal{R}_{n_1}$, we can complete the set by filling in the missing ones. More precisely let $D$ be the diagonal matrix consisting of all the missing $n_1$-th roots of unity,
 {$D = diag\{ \mathcal{R}_{n_1} \setminus \sigma_{U_1}\}$, where}
 {
\[
\mathcal{R}_{n_1} \setminus \sigma_{U_1}:= \left\{e^{\frac{2\pi i b}{n_1}}: 1 \le b \le n_1,  b\neq b_j \right\}.
\]
} 
{Then $D \in \mathbb{C}_{d\times d}$ with $d = \left|\mathcal{R}_{n_1}\right| - \left|\sigma_{U_1}\right| = n_1 - n$. The block diagonal matrix $A:=\begin{bmatrix} U_1 & 0 \\ 0 & D\end{bmatrix} \in \mathbb{C}_{n_1\times n_1}$ is unitarily equivalent to a cyclic permutation as its eigenvalues form a complete set of roots of unity $\mathcal{R}_{n_1}$. 
%  {[SHOULD BE $\mathcal{R}_{n_1}$?]}
Finally, by the construction of $A$: 
\[
\norm{A-\begin{bmatrix} U & 0 \\ 0 & D\end{bmatrix}} = \norm{ \begin{bmatrix} U_1-U & 0 \\ 0 & 0\end{bmatrix}} = \norm{U_1-U}  <\delta,
\]
as desired. 
}
\end{proof}

% \topermutation
% In Section \ref{sec:permutation_univ} we show that any linear reservoir system with a unitary dynamic coupling can be approximated by a linear reservoir system with full-cycle permutation state coupling.
% {In the following we are not using $\Bx^{(i)}$ like in SMCR, shall we use a different notation or is this fine?}
% {[I THINK IT IS BETTER TO BE CONSISTENT THROUGHOUT THE PAPER, BUT IN THIS CASE HOPEFULLY THERE WILL BE NO CONFUSION WITH MULTI-CYCLE]}

\begin{theorem}\label{thm.to.permutation} Let $U$ be an $n\times n$ unitary matrix and $W = \lambda U$ with $\lambda \in \left(0,1\right)$. Let $R=(W,V,h)$ be a reservoir system that satisfies the assumptions of Definition~\ref{def.lrc} with state coupling $W$. For any $\epsilon>0$, there exists a reservoir system $R_c=(W_c, V_c, h_c)$ that is $\epsilon$-close to $R$ such that:
\begin{enumerate}
\item $W_c$ is a contractive full-cycle permutation with $\|W_c\|=\|W\| = \lambda \in \left(0,1\right)$, and
\item $h_c$ is $h$ with linearly transformed domain. 
% \item $h_c$ the composition of $h$ with a linear map. 
\end{enumerate}
\end{theorem}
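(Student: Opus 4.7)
The plan is to chain Proposition~\ref{prop.perturb.unitary} (which approximates any unitary, after diagonal padding, by a matrix unitarily equivalent to a full-cycle permutation) with Proposition~\ref{prop.similar} (which converts dynamical couplings by similarity into equivalent reservoir systems). First, apply Proposition~\ref{prop.perturb.unitary} to the unitary factor $U$ to obtain an integer $n_1 > n$, an $n_1\times n_1$ matrix $A = S P S^{-1}$ (for some unitary $S$ and full-cycle permutation $P$), and a unit-modulus diagonal matrix $D$ of size $(n_1-n)\times(n_1-n)$ with
\[
\left\| A - \begin{bmatrix} U & 0 \\ 0 & D \end{bmatrix} \right\| < \delta,
\]
where $\delta > 0$ will be fixed later.

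Embed $R$ into $\mathbb{C}^{n_1}$ by setting $\tilde W := \lambda\begin{bmatrix} U & 0 \\ 0 & D \end{bmatrix}$, $\tilde V := \begin{bmatrix} V \\ 0 \end{bmatrix}$, and $\tilde h(x_1,\ldots,x_{n_1}) := h(x_1,\ldots,x_n)$. Because $\tilde V$ activates only the first $n$ coordinates and $\tilde W$ is block-diagonal with zero cross-blocks, the state of $\tilde R := (\tilde W, \tilde V, \tilde h)$ agrees with that of $R$ in the first $n$ coordinates, so $\tilde R$ is equivalent to $R$. Next consider $R' := (\lambda A, \tilde V, \tilde h)$. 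Both $\tilde W$ and $\lambda A$ have operator norm $\lambda$, and $\|\tilde W - \lambda A\| \leq \lambda\delta$; a telescoping identity then yields $\|\tilde W^k - (\lambda A)^k\| \leq k\lambda^k\delta$ for every $k\geq 0$. Summing against the uniform bound $\|\Bc_{t-k}\|\leq M$ shows that the states of $\tilde R$ and $R'$ differ at every instant by at most $\delta\|V\|M \cdot \lambda/(1-\lambda)^2$. Choosing $\delta$ small enough and invoking uniform continuity of $\tilde h$ on the closed ball of radius $\|V\|M/(1-\lambda)$ in $\mathbb{C}^{n_1}$ (which contains the trajectories of both systems) guarantees that $R'$ is $\epsilon$-close to $\tilde R$, hence to $R$.

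Finally, apply Proposition~\ref{prop.similar} to $R'$ with the unitary $S$: since $\lambda A = S(\lambda P)S^{-1}$, one obtains an equivalent system $R_c := (\lambda P,\, S^{-1}\tilde V,\, \tilde h \circ S)$. Here $W_c := \lambda P$ is a contractive full-cycle permutation with $\|W_c\|=\lambda$ (as $P$ is unitary), and $h_c(\x) := \tilde h(S\x) = h(J^* S\x)$ is $h$ composed with the linear map $J^* S:\mathbb{C}^{n_1}\to\mathbb{C}^n$, i.e.\ $h$ with linearly transformed domain. Chaining the two equivalences $R\leftrightarrow\tilde R$ and $R'\leftrightarrow R_c$ with the $\epsilon$-approximation between $\tilde R$ and $R'$ delivers the desired $R_c$.

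The main technical obstacle is the perturbation estimate between $\tilde W^k$ and $(\lambda A)^k$: the trivial bound $2\lambda^k$ does not vanish with $\delta$, so the sharper telescoping bound $k\lambda^k\delta$ is essential. Its summability, and hence the ability to choose $\delta$ independently of the input stream, relies crucially on $\lambda<1$, yielding a total state error of order $\delta/(1-\lambda)^2$ that can be made arbitrarily small. Everything else is an organised composition of the two structural lemmas already in hand.
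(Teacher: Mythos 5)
Your proposal is correct and follows the same overall route as the paper: invoke Proposition~\ref{prop.perturb.unitary} to pad $U$ with a unimodular diagonal block $D$ and approximate the result by a matrix unitarily equivalent to a full-cycle permutation, check that the padded system is equivalent to $R$, bound the resulting state perturbation, and finish with Proposition~\ref{prop.similar}. The one step where you genuinely diverge is the perturbation estimate for $\sum_{k}\bigl(\tilde W^k-(\lambda A)^k\bigr)\tilde V\Bc_{t-k}$: you use the telescoping identity $X^k-Y^k=\sum_{j=0}^{k-1}X^j(X-Y)Y^{k-1-j}$ together with the fact that both matrices have operator norm exactly $\lambda$ (being $\lambda$ times unitaries), which gives the single summable bound $k\lambda^k\cdot\lambda\delta$ and a total state error of order $\delta\lambda/(1-\lambda)^2$. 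The paper instead splits the series at a cutoff $N$, bounds the head via a binomial expansion by $(\lambda+\delta_0)^j-\lambda^j$ --- a bound that is not summable over all $j$ once $\lambda+\delta_0\geq 1$, which is precisely why the split and the auxiliary parameters $N$ and $\delta_0$ are needed --- and bounds the tail crudely by $2\lambda^j$. Your telescoping estimate eliminates both auxiliary parameters and is the cleaner of the two arguments; everything else (the equivalence of the padded system to $R$, the final similarity step producing $W_c=\lambda P$ with $\norm{W_c}=\lambda$, and the identification of $h_c=h(J^*S\,\cdot)$ as $h$ with linearly transformed domain) matches the paper's proof.
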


% \begin{theorem}\label{thm.to.permutation} Let $U$ be an $n\times n$ unitary matrix and $W = \lambda U$ with $\lambda \in \left(0,1\right)$. Let $R=(W,V,h)$ be a reservoir system defined by $W$. For any $\epsilon>0$, there exists a reservoir system $R_c=(W_c, V_c, h_c)$ that is $\epsilon$-close to $R$ with:
% \begin{enumerate}
% \item $W_c$ is a contractive full-cycle permutation with $\|W_c\|=\|W\| = \lambda \in \left(0,1\right)$, and
% \item $h_c$ the composition of $h$ with a linear map. 
% \end{enumerate}
% \end{theorem}

\begin{proof} Let $\epsilon >0$ be arbitrary. By the proof of Theorem \ref{thm.dilation},
the state space $X$ is compact and we can choose $\delta$ such that $\|\Bx-\Bx'\|<\delta$ implies $\|h(\Bx)-h(\Bx')\|<\epsilon$. Let $M:=\sup \|\Bc_t\|<\infty$, since $\lambda < 1$ we can pick $N>0$ such that 
\begin{align}
\label{eqn.halfdelta_end}
2 M \|V\|  \sum_{k>N} \lambda^k  < \frac{\delta}{2}.
\end{align}
Once we fix such an $N$, pick $\delta_0>0$ such that
\begin{align}
\label{eqn.halfdelta_front}
M \|V\| \sum_{k=0}^N ((\lambda+\delta_0)^k-\lambda^k)  < \frac{\delta}{2}.
\end{align}
Such a $\delta_0$ exists because the left-hand side is a finite sum that is continuous in $\delta_0$ and tends to $0$ as $\delta_0\to 0$.
According to Proposition~\ref{prop.perturb.unitary}, there exists a $n_1\times n_1$ matrix $A$ that is unitarily equivalent to a full-cycle permutation and a diagonal matrix $D$ such that 
\[
\norm{A-\begin{bmatrix} U & 0 \\ 0 & D\end{bmatrix}}<\frac{1}{\lambda}\min\{\delta,\delta_0\}.
\]

Let $Q_n:\mathbb{C}^{n_1} \hookrightarrow \mathbb{C}^n$ be the canonical projection onto the first $n$ coordinates. Consider the reservoir systems $R_0 :=(W_0, V_0, h_0)$ and $R_1 :=(W_1, V_0, h_0)$ defined by the following:
\begin{align*}
    W_0 = \lambda \begin{bmatrix} U & 0 \\ 0 & D\end{bmatrix} , & \quad V_0 = \begin{bmatrix} V \\ 0 \end{bmatrix} \\
    W_1 = \lambda A,&\quad  h_0(\Bx) = h(Q_n(\Bx)).
\end{align*}
Notice that the choice of $A$ ensures that $\norm{W_1 - W_0} <  \min\{\delta, \delta_0\}$.

The rest of the proof is outlined as follows: We first show that $R_0$ is equivalent to $R$, and then prove that $R_1$ is $\epsilon$-close to $R_0$. By Proposition \ref{prop.perturb.unitary}, $A$ is unitarily equivalent to a full-cycle permutation matrix, and the desired results follow from Proposition \ref{prop.similar}. We now flesh out the above outline: 

% \begin{enumerate}
%   \setcounter{enumi}{0}
%   \item We first establish that $R_0$ is equivalent to $R$. 
% \end{enumerate}

We first establish that $R_0$ is equivalent to $R$. 
For any input stream $\{\Bc_t\}_{t\in\mathbb{Z}_-}$, the solution to $R_0$ is given by
\begin{align*}
    {\By}_t^{(0)} &= h_0\left(\sum_{k\geq 0} W_0^k V_0 \Bc_{t-k}\right) \\
    &= h\left(Q_n \left(\sum_{k\geq 0} \begin{bmatrix} (\lambda U)^k & 0 \\ 0 & (\lambda D)^k \end{bmatrix} \begin{bmatrix} V \\ 0 \end{bmatrix} \Bc_{t-k}\right)\right) \\
    &= h\left(Q_n \left(\begin{bmatrix} \sum_{k\geq 0} W^k V \Bc_{t-k} \\0 \end{bmatrix}\right)\right) \\
    &= h\left(\sum_{k\geq 0} W^k V \Bc_{t-k}\right).
\end{align*}
This is precisely the solution to $R$. 

We now show that $R_1$ is $\epsilon$-close to $R_0$. 

First, we observe that since $Q_n$ is a projection onto the first $n$-coordinates, it has operator norm $\norm{Q_n}=1$
% {[NOT COMPLETELY TRUE, IT IS NOT CONTRACTIVE FOR STATES WITH THE LAST $n_1 - n$ COORDINATES EQUAL TO 0. THERE IT IS IDENTITY]}
% {Maybe this is confusion over the word contractive. Here I mean the operator norm $\leq 1$. changed the wording here, we only need its operator norm.}
and thus  {whenever  $\norm{\Bx-\Bx'}<\delta$, $\Bx,\Bx'\in\mathbb{C}^{n_1}$, we have} $\norm{Q_n \Bx- Q_n \Bx'}<\delta$ and thus $\|h(Q_n \Bx)-h(Q_n \Bx')\|<\epsilon$. Therefore it suffices to prove that for any input $\{\Bc_t\}$, the solution to $R_0$, given by
\[{\Bx}_t^{(0)} = \sum_{k\geq 0} W_0^k V_0 \Bc_{t-k},\]
is within $\delta$ to the solution to $R_1$, given by
\[{\Bx}_t^{(1)} = \sum_{k\geq 0} W_1^k V_0 \Bc_{t-k}.\]
By construction $V_0=\begin{bmatrix} V \\ 0\end{bmatrix}$ has $\|V_0\|=\|V\|$, hence:
\begin{align}
\label{eqn.state_r0r1_diff}
 \norm{{\Bx}_t^{(0)} - {\Bx}_t^{(1)}} &= \norm{\sum_{k\geq 0} (W_0^k-W_1^k) V_0 \Bc_{t-k}} \nonumber \\
&\leq \sum_{k\geq 0}  \norm{(W_0^k-W_1'^k)} \norm{V_0} M \nonumber \\
&= \sum_{k=0}^N  \norm{\left(W_0^k-W_1^k\right)} \norm{V} M + \sum_{j>N}  \norm{\left(W_0^k-W_1^k\right)} \norm{V} M.
\end{align}

Consider  {the matrix} $\Delta=W_0-W_1$, we then have $\|\Delta\|<\delta_0$ and for each $0\leq j\leq N$, $W_0^j-W_1^j=(W_1+\Delta)^j - W_1^j$. Expanding $(W_1+\Delta)^j$, we get a summation of $2^j$ terms of the form $\prod_{i=1}^j X_i$, where each $X_i=W_1$ or $\Delta$.  For $s = 0,\ldots , j$, each of the $2^j$ terms has norm $\norm{\prod_{i=1}^j X_i}\leq \|W_1\|^{j-s} \|\Delta\|^{s}$ if there are $s$ copies of $\Delta$ among $X_i$. 
%$W_0^j - W_1^j$ is thus obtained by 
 {Removing the term $W_1^j$ from $(W_1+\Delta)^j$ results in all the remaining terms containing at least one copy of $\Delta$.} We thus arrive at: 

\begin{align}
    \norm{W_0^j - W_1^j} &= \norm{\left(W_1+\Delta\right)^j - W_1^j} 
    \nonumber \\
    &\leq \sum_{s=1}^j \binom{j}{s} \norm{W_1}^{j-s} \norm{\Delta}^{s} 
    \nonumber \\
   &\leq \sum_{s=1}^j \binom{j}{s} \lambda^{j-s} \delta_0^{s} = (\lambda+\delta_0)^j - \lambda^j.
\label{eq:bound}
\end{align}
Combining the above with Equation \eqref{eqn.halfdelta_front}, we have:
\[
M \|V\| \sum_{j=0}^N  \|(W_0^j-W_1^j)\| \leq 
M \|V\| \sum_{j=0}^N ((\lambda+\delta_0)^j - \lambda^j) < \frac{\delta}{2}. 
\]
On the other hand by Equation \eqref{eqn.halfdelta_end}, we obtain:
\begin{eqnarray*}
M \|V\| \sum_{j>N}  \|(W_0^j-W_1^j)\|  
&\leq& 
M \|V\| \sum_{j>N}  (\|W_0\|^j+\|W_1\|^j)\\
&\leq& 2 M \|V\| \sum_{j>N}  \lambda^j\\
&<&\frac{\delta}{2}.
\end{eqnarray*}
With the two inequalities above, Equation \eqref{eqn.state_r0r1_diff} thus becomes:
\[ \|{\Bx}_t^{(0)} - {\Bx}_t^{(1)} \| <\delta.\]
Uniform continuity of $h$ implies $\norm{h({\Bx}_t^{(0)}) - h({\Bx}_t^{(1)})}<\epsilon$, proving $R_1$ is $\epsilon$-close to $R_0$. 

Finally, by Proposition \ref{prop.perturb.unitary} $A$ is unitarily equivalent to a full-cycle permutation matrix $P$, i.e. there exists unitary matrix $S$ such that $S^* A S=P$. By Proposition~\ref{prop.similar}, we obtain a reservoir system $R_c=(W_c, V_c, h_c)$ with $W_c=\lambda P$, such that $R_c$ is equivalent to $R_1$, which is in turn $\epsilon$-close to $R_0$. Since the original reservoir system $R$ is equivalent to $R_0$, $R$ is therefore $\epsilon$-close to $R_c$, as desired.
\end{proof}

\section{Universality of \PermutationReservoir, \texorpdfstring{$\mathbb{C}$-SCR} ,, and Twin SCRs}\label{sec:mainresults}

We are now ready to prove the main results: the universality of three distinctive linear reservoir systems: \PermutationReservoir, Complex Simple Cycle Reservoir and a Twin Simple Cycle Reservoir. We begin by showing  
% a stronger version of Theorem \ref{thm.to.permutation}: 
that any linear reservoir system $R$ is $\epsilon$-close to a \PermutationReservoir $R' = (W', V', h')$. That is, the approximant system $R'$ has both a contractive coupling matrix $W'$ in block-diagonal form with identical contractive full-cycle permutation blocks, and an input map (input-to-state coupling) $V'$ whose entries are all $\pm 1$.

%\comment{since the proof is quite split I felt we should split into 3 main theorems... instead of 2. See tikz}:

\begin{theorem}\label{thm.main.pr} 
For any reservoir system $R=(W,V,h)$ that satisfies the assumptions of Definition~\ref{def.lrc}  and any $\epsilon>0$, there exists a \PermutationReservoir \ $R'=(W',V', h')$ that is $\epsilon$-close to $R$. 
Moreover, $\norm{W}=\norm{W'}$ and  $h'$ is $h$ with linearly transformed domain. 
% $h'$ and $h$ have the same function form \footnote{Not sure if this is the correct term. Want to describe here  composed with a linear transformation.}
\end{theorem}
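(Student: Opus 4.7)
The plan is to chain Theorems \ref{thm.dilation} and \ref{thm.to.permutation} with a final \emph{digitization} step that converts a complex input coupling into $\pm 1$ entries at the cost of replicating the cycle block. Given $R=(W,V,h)$ and $\epsilon>0$, Theorems \ref{thm.dilation} and \ref{thm.to.permutation} produce a reservoir system $R_c=(W_c, V_c, h_c)$ with $W_c=\lambda P$ for some $n_1\times n_1$ full-cycle permutation $P$ and $h_c = h\circ L$ for some linear $L$, that is $\epsilon/2$-close to $R$. The remaining task is to approximate $R_c$ by an SMCR to within $\epsilon/2$, while preserving $\norm{W}=\lambda$ and the ``$h$ with linearly transformed domain'' property.

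For the digitization, let $B$ uniformly bound the real and imaginary parts of the entries of $V_c$, fix a large integer $K$, set $k=2K$, and choose mixing coefficients $a_j = B\cdot 2^{-j}$ for $j=1,\dots,K$ and $a_{K+j} = iB\cdot 2^{-j}$ for $j=1,\dots,K$. For each entry $(l,m)$, the greedy $\pm 1$ dyadic expansion yields sign sequences $\{\varepsilon^{R}_{j,l,m}\}_{j=1}^K$ and $\{\varepsilon^{I}_{j,l,m}\}_{j=1}^K$ approximating $\mathrm{Re}\,(V_c)_{lm}/B$ and $\mathrm{Im}\,(V_c)_{lm}/B$ to within $2^{-K}$. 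Collect these signs into matrices $V^{(j)}\in\{-1,+1\}^{n_1\times m}$ with $(V^{(j)})_{lm}=\varepsilon^{R}_{j,l,m}$ for $j\le K$ and $(V^{(K+j)})_{lm}=\varepsilon^{I}_{j,l,m}$. Let $W' = \mathrm{diag}(W_c,\dots,W_c)$ be the block-diagonal matrix with $k$ identical blocks $W_c$, and let $V'$ be the vertical concatenation of $V^{(1)},\dots,V^{(k)}$; then $V'\in\{-1,+1\}^{(kn_1)\times m}$, and the effective input matrix $V^*:=\sum_{i=1}^k a_i V^{(i)}$ satisfies $\norm{V^* - V_c}\to 0$ as $K\to\infty$.

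With component states $\Bx_t^{(i)} = \sum_{j\geq 0}(\lambda P)^j V^{(i)}\Bc_{t-j}$, linearity yields $\sum_i a_i \Bx_t^{(i)} = \sum_{j\geq 0}(\lambda P)^j V^* \Bc_{t-j}$, so the gap between this combined state and the state of $R_c$ is bounded by $M\norm{V^* - V_c}/(1-\lambda)$. Defining $h'(\Bx):=h_c\bigl(\sum_i a_i \Bx^{(i)}\bigr)$ gives the sought SMCR $R'=(W', V', h')$; since $h_c=h\circ L$ and $\Bx\mapsto \sum_i a_i \Bx^{(i)}$ is linear, $h'$ is $h$ with linearly transformed domain, while block-diagonality with identical blocks of operator norm $\lambda$ gives $\norm{W'}=\lambda=\norm{W}$. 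Uniform continuity of $h_c$ on the compact state space of $R_c$ then converts a sufficiently small $\norm{V^*-V_c}$ (achievable by taking $K$ large) into $\norm{\By_t^{(c)} - \By_t'}<\epsilon/2$, giving $\epsilon$-closeness of $R$ and $R'$ by the triangle inequality. The main subtlety is that the mixing coefficients $(a_i)$ are shared across \emph{all} entries of $V_c$, so the digitization must succeed uniformly in $(l,m)$; this is precisely what the entrywise dyadic $\pm 1$ construction guarantees, because the same coefficients $a_j = B\cdot 2^{-j}$ (and $iB\cdot 2^{-j}$) work simultaneously for every entry, with only the sign patterns $\varepsilon^{R,I}_{j,l,m}$ depending on $(l,m)$.
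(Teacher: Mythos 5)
Your proof is correct, and its overall architecture (dilation $\to$ full-cycle permutation $\to$ replicate the cycle into identical blocks and push the complex scalars into the readout's mixing coefficients) matches the paper's. The one genuinely different ingredient is the digitization step. The paper does not approximate $V_c$ at all: it fixes a vector-space basis $\{E_i\}_{i=1}^{nm}$ of $\mathbb{M}_{n\times m}$ consisting of $\pm 1$ matrices, writes $V_c=\sum_i a_i E_i$ \emph{exactly}, stacks the $E_i$ as the input coupling of an SMCR with $nm$ identical blocks, and uses the coordinates $a_i$ as mixing coefficients; the resulting SMCR is then \emph{equivalent} to the cyclic-permutation system, so the entire $\epsilon$ budget is spent on the dilation and permutation steps. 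You instead use a greedy dyadic $\pm 1$ expansion of each (normalized) real and imaginary entry with shared coefficients $B2^{-j}$ and $iB2^{-j}$, which introduces one more approximation layer but is controlled by the standard bound $\bigl|x-\sum_{j\le K}\varepsilon_j 2^{-j}\bigr|\le 2^{-K}$ on $[-1,1]$ and the geometric-series estimate $M\norm{V^*-V_c}/(1-\lambda)$ you give. The trade-off is in the block count: the paper needs exactly $n_1 m$ blocks, fixed by the dimensions after the permutation step, while you need $2K$ blocks, independent of the dimensions but growing with the required precision; your scheme also makes the mixing coefficients a universal, entry-independent sequence, which is a mild structural bonus. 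The only points to tidy up are routine and consistent with the paper's own level of rigor: uniform continuity of $h_c$ should be invoked on a compact set slightly enlarging the state space of $R_c$ (since the combined state $\sum_i a_i\Bx_t^{(i)}$ only lies within $\delta$ of it), and the entrywise bound must be converted into an operator-norm bound on $V^*-V_c$ (e.g.\ via the Frobenius norm), which costs a dimension-dependent factor absorbed into the choice of $K$.
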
 
% of dimensions $(n,m,d)$  of dimension $(n',m,d)$  

% If we want to put the statement directly here
% \comment{so the permutation case is done implicitly?, i suppose this implies we don't need the previous proof? or do we split into new theorem?}
% \begin{remark}
%     In  we have shown that any  reservoir system $R'=(W', V', h')$ where $W'$ is a  It remains to show that one can make the entries in the matrix $V$ to be all $\pm 1$. 
% % {$\pm i$?} No, W will be an assembly of full-cycle, V can be real.
% \end{remark}

\begin{proof}
%[Proof of Theorem~\ref{thm.main.pr}]
Consider a reservoir system $R=(W,V,h)$ with dimensions $(n,m,d)$. By Theorem~\ref{thm.dilation} and Theorem~\ref{thm.to.permutation}, we may assume without loss of generality that the coupling matrix $W$ is a contractive full-cycle permutation, that is, $W=\lambda P$ for some full-cycle permutation $P$ and $\lambda=\|W\|<1$. 
%\comment{in this case the dimension would be no longer $n$? I suppose the new dimension is a function of this new $n$ shouldn't be an issue? 
%Re We are assuming WLOG W is already a full-cycle.}

     {Let $\{E_i\}_{i=1}^{nm}$ be a vector space basis for $\mathbb{M}_{n\times m}$ such that each $E_i\in \mathbb{M}_{n\times m}(\{1,-1\})$.} We have $V=\sum a_i E_i$ for some constants $a_i\in \mathbb{C}$. Consider the permutation reservoir $R'=(W',V',h')$ of dimension $(n^2 m , m,d)$ defined by the following:

\begin{align*}
    W' := \begin{bmatrix} W & & \\
    & \ddots & \\
    & & W \end{bmatrix}, &\quad V':=\begin{bmatrix} E_1 \\ E_2 \\ \vdots \\ E_{nm}\end{bmatrix} \\
h'({\Bx}^{(1)}, \cdots, {\Bx}^{(nm)}) &= h\left( \sum_{i=1}^{nm} a_i \Bx^{(i)}\right).
\end{align*}
By construction entries in $V'$ are all $\pm 1$ and $W'$ is block-diagonal with $nm$ identical blocks of $W$. $W'$ is therefore a contractive \textit{permutation}; We note that $W'$ is no longer a full-cycle permutation from this step. 
%  removed contractive, since it was a bit misleading
%\comment{note: not a full-cycle permutation anymore? so $R'$ isn't SCR then}. 

 Given any input stream $\{c_t\}_{t\in\mathbb{Z}_-}$, the solution to $R'$ is given by:
\[{\By'_t} = h'\left(\sum_{n\geq 0} \left(W'\right)^n V' \Bc_{t-n}\right), \]
where by construction:
\[\sum_{n\geq 0} \left(W'\right)^n V' \Bc_{t-n} = \begin{bmatrix} \sum_{n\geq 0} W^n E_1 \Bc_{t-n} \\ \sum_{n\geq 0} W^n E_e \Bc_{t-n} \\ \vdots \\ \sum_{n\geq 0} W^n E_{nm} \Bc_{t-n}\end{bmatrix}. \]
Therefore,
\begin{align*}
h'\left(\sum_{n\geq 0} \left(W'\right)^n V' \Bc_{t-n}\right) 
&= h\left(\sum_{i=1}^{nm} a_i \sum_{n\geq 0} W^n E_{nm} \Bc_{t-n}\right) \\
&= h\left(\sum_{n\geq 0} W^n\left (\sum_{i=1}^{nm} a_i E_{nm}\right) \Bc_{t-n}\right) \\
&= h\left(\sum_{n\geq 0} W^n V \Bc_{t-n}\right)
\end{align*}
This is precisely the solution to $R=(W,V,h)$. 
\end{proof}

We have now shown that any reservoir system $R$ defined by a  {full-cycle} permutation coupling and arbitrary input map is equivalent to \PermutationReservoir \ in the sense of Definition \ref{def.rc}. By Theorem \ref{thm.dilation} and Theorem~\ref{thm.to.permutation},  {any reservoir system is $\epsilon$-close to a Simple Multi-cycle Reservoir}. The  {remaining two cases of Simple Cycle Reservoir structures require} a more careful construction.

We first  {show that a full-cycle block arrangement of individual full-cycle permutation blocks can be under some conditions rearranged into a larger full-cycle permutation matrix.}

\begin{lemma}\label{lm.fullcycleW} Let $n,k$ be two natural numbers such that $\gcd(n,k)=1$. Let $P$ be an $n\times n$ full-cycle permutation. Consider the $nk\times nk$ matrix:
\[P_1=\begin{bmatrix} 
0   & 0  & 0 & & \hdots      & 0   & P \\
P & 0  & 0 & & \hdots      & 0   & 0 \\
 0   & P &  0     & \hdots   & & 0 & 0 \\
 \vdots   &   & \ddots&    &   &  \vdots & \vdots\\
 0   & \hdots  &  &  &   & P  & 0 
\end{bmatrix}.\]
Then $P_1$ is a full-cycle permutation. 
\end{lemma}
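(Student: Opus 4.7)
The plan is to identify $P_1$ as the permutation matrix associated with a concrete permutation $\tau$ on the product index set $\{1,\dots,k\}\times\{1,\dots,n\}$ and then show that, under the coprimality hypothesis, the orbit of any single element under $\tau$ has full length $nk$.

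First, I would fix notation: let $\sigma\in S_n$ be the $n$-cycle with $Pe_j=e_{\sigma(j)}$, and index the canonical basis of $\mathbb{C}^{nk}$ by pairs $(i,j)$ with $i\in\{1,\dots,k\}$ (the block row/column) and $j\in\{1,\dots,n\}$ (within-block position). Reading off the block structure of $P_1$, the only nonzero block in column block $i$ is the $P$ sitting in row block $i+1$ for $1\le i\le k-1$, and the $P$ in row block $1$ for $i=k$. Consequently
\[
P_1\, e_{(i,j)} \;=\; e_{(i+1,\,\sigma(j))},
\]
where the first coordinate is taken modulo $k$ (with residues $1,\dots,k$). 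In particular $P_1$ is the permutation matrix corresponding to the bijection $\tau(i,j)=(i+1,\sigma(j))$.

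Next I would iterate: a direct induction gives $\tau^m(i,j)=(i+m,\sigma^m(j))$ for every $m\ge 1$, where again the first coordinate is taken mod $k$. Starting from, say, $(1,1)$, we have $\tau^m(1,1)=(1,1)$ iff simultaneously $m\equiv 0\pmod k$ and $\sigma^m(1)=1$. Because $\sigma$ is an $n$-cycle its order is exactly $n$, so the second condition is equivalent to $m\equiv 0\pmod n$. Hence the orbit length of $(1,1)$ equals $\mathrm{lcm}(n,k)$, which by the hypothesis $\gcd(n,k)=1$ equals $nk$. Since the total number of basis elements is $nk$, the orbit of $(1,1)$ exhausts the whole index set, so $\tau$ is a single $nk$-cycle and $P_1$ is a full-cycle permutation.

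I don't expect any genuine obstacle here; the only care needed is bookkeeping the modular arithmetic on the block index and confirming the convention $Pe_j=e_{\sigma(j)}$ so that the map produced by the block pattern indeed becomes $(i,j)\mapsto(i+1,\sigma(j))$ rather than its inverse. With that fixed, the coprimality hypothesis enters precisely at the $\mathrm{lcm}$ step to force the orbit length to equal the full dimension $nk$.
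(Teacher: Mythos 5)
Your proof is correct and follows essentially the same route as the paper: both arguments track the orbit of a single basis vector under $P_1$, observe that the block index must return (forcing $k\mid m$) while the within-block index must return under the $n$-cycle (forcing $n\mid m$), and conclude the orbit length is $nk$ by coprimality. Your product-index formulation $\tau(i,j)=(i+1,\sigma(j))$ is just a slightly tidier bookkeeping of the same idea.
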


\begin{proof}
By construction, $P_1$ is a permutation matrix  since each row and each column has all $0$ except one entry of $1$. Denote the canonical basis in $\mathbb{R}^{nk}$ by 
 {
$\mathcal{E} := \left\{\e_1, \e_2, \cdots, \e_{nk}\right\}$. }
%Note that the basis is indexed in $\mod \left(nk\right)$, which we omit for simplicity in notation. 

For each $i = 0,\ldots, k-1$, consider the $i^{th}$ block $\mathcal{E}_i:= \left\{\e_{in+j}\right\}_{j=1}^n \subset \mathcal{E}$. Then by construction $P_1$ maps the $\mathcal{E}_i$ to $\mathcal{E}_{i+1}$ , i.e. 
\[P_1: \left\{\e_{in+j}\right\}_{j=1}^n  \mapsto \left\{\e_{in+j+n}\right\}_{j=1}^n  = \left\{\e_{\left(i+1\right) n+j}\right\}_{j=1}^n\] 
%$\{e_{in+1}, e_{in+2}, \cdots, e_{in+n}\}$ to  $\{e_{(i+1)n+1}, e_{(i+1)n+2}, \cdots, e_{(i+1)n+n}\}$. 
Consider the $0^{th}$ block $\mathcal{E}_0 = \left\{\e_1,\ldots,\e_n\right\}$ and $\e_1$'s orbit under $P_1$ denoted by $P_1^s \e_1$, then:
\begin{align*}
P_1^s \e_1 \in \mathcal{E}_0 &\Leftrightarrow \left(1+ sn \right) \mod (nk) \in \left\{1,\ldots,n\right\} \\
	      &\Leftrightarrow \exists \alpha \in \mathbb{Z} \text{ such that } s = \alpha k \\
	      &\Leftrightarrow k \mid s.
\end{align*}

When $k \mid s$, $P_1^s \e_1 = P^s \e_1 \in \mathcal{E}_0$. Since $P$ is a full-cycle permutation of dimension $n$, for each $1\leq i\leq n$, $P^s \e_i=\e_i$ if and only if $n\mid s$. Therefore, $\e_1=P_1^s \e_1 = P^s \e_1$ implies $n\mid s$. 
%  {By a similar argument as above? Re: I think we don't need to say that. Also first line is pedantically $Q_n^*P^s Q_n(e_1)$ right? Re: technically yes, but it is clear in the context I feel. }
% \begin{align*}
% e_1 = P^s e_1=e_{i+s \mod n} &\Leftrightarrow i+s \mod n = i\\
%  &\Leftrightarrow \exists \beta \in \mathbb{Z} \text{ such that } s = \beta n \\
% &\Leftrightarrow n \mid s. 
% \end{align*}

Combining the above with the assumption that $\gcd(k,n)=1$, we have that $P_1^s \e_1=\e_1$ if and only if $nk \mid s$. The permutation $P_1$ thus contains a cycle whose length is at least $nk$. Hence, $P_1$ is a full-cycle permutation. 
\end{proof}

\begin{example}  {We emphasis that the condition $\gcd(n,k)=1$ is crucial in Lemma~\ref{lm.fullcycleW}.} Consider a simple example where $n=2$ and $k=3$. Let $P$ be the matrix for cyclic permutation $(1,2)$,
    \[P=\begin{bmatrix} 0 & 1 \\ 1 & 0\end{bmatrix}.\]
    From our construction, the matrix $P_1$ is
    \[P_1=\begin{bmatrix}
    &&&&&1 \\
    &&&&1&\\
    &1&&&&\\
    1&&&&&\\
    &&&1&&\\
    &&1&&&
    \end{bmatrix}\]
    One can check that $P_1$ corresponds to the cyclic permutation $(1,4,5,2,3,6)$.  If we picked $k=2$, then 
    \[P_1 = \begin{bmatrix}
    &&&1 \\
    &&1& \\
    &1&& \\
    1&&&
    \end{bmatrix},\]
    which is not a full-cycle permutation. 
\end{example}

\begin{lemma}\label{lm.simpleV} For any $n\times m$ real matrix $V$ and $\delta>0$, there exists $k$ matrices $\{F_1, \cdots, F_k\} \subset \mathbb{M}_{n \times m}\left(\left\{-1,1\right\}\right)$ and a constant integer $N>0$ such that:
\[\norm{V-\frac{1}{N} \sum_{j=1}^k F_j} < \delta\]
Moreover, $k$ can be chosen such that $\gcd(k,n)=1$. 
\end{lemma}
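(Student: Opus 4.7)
The core observation is that every entry of $\frac{1}{N}\sum_{j=1}^{k} F_j$, with $F_j\in\mathbb{M}_{n\times m}(\{-1,1\})$, takes the form $(2a-k)/N$ for some integer $a\in\{0,1,\dots,k\}$, namely the number of matrices $F_j$ carrying $+1$ at that position. Hence the set of attainable values at a fixed entry is an arithmetic progression of step $2/N$ filling $[-k/N,\,k/N]$. Since the operator norm on $\mathbb{M}_{n\times m}$ is dominated by $\sqrt{nm}$ times the entrywise maximum (via the Frobenius norm), it suffices to approximate $V$ entrywise to within $1/N$, provided the progression $\{(2a-k)/N\}_{a=0}^{k}$ covers the range $[-M,M]$ of $V$'s entries, where $M:=\max_{p,q}|V_{pq}|$ (with $M$ replaced by any positive number if $V=0$).

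\textbf{Construction.} Given $\delta>0$, first choose $N\in\mathbb{N}$ with $\sqrt{nm}/N<\delta$, and then pick $k\in\mathbb{N}$ with $k\ge NM$ and $\gcd(k,n)=1$; such a $k$ exists because any prime exceeding $\max(n,NM)$ is automatically coprime to $n$, and there are infinitely many such primes. For each entry position $(p,q)$, let $a_{pq}\in\{0,1,\dots,k\}$ be the nearest integer to $(NV_{pq}+k)/2$. The inequality $|V_{pq}|\le M\le k/N$ places the target inside $[0,k]$, so $a_{pq}$ is well-defined and satisfies $|(2a_{pq}-k)-NV_{pq}|\le 1$, i.e.\ $\bigl|V_{pq}-\tfrac{2a_{pq}-k}{N}\bigr|\le\tfrac{1}{N}$. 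Define $F_j\in\mathbb{M}_{n\times m}(\{-1,1\})$, for $j=1,\dots,k$, by declaring its $(p,q)$-entry to be $+1$ if $j\le a_{pq}$ and $-1$ otherwise. Then the $(p,q)$-entry of $\sum_{j=1}^{k}F_j$ is $a_{pq}-(k-a_{pq})=2a_{pq}-k$.

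\textbf{Verification and remarks.} Every entry of $V-\tfrac{1}{N}\sum_{j=1}^{k}F_j$ has modulus at most $1/N$, and dominating the operator norm by the Frobenius norm gives
\[
\norm{V-\tfrac{1}{N}\sum_{j=1}^{k}F_j}\le\sqrt{nm}\cdot\tfrac{1}{N}<\delta,
\]
as required. The only piece of bookkeeping is the joint choice of $N$ and $k$: once $N$ has been fixed to control the entrywise error, we must find $k$ both large enough to bracket $V$'s entries \emph{and} coprime to $n$, which is immediately arranged by picking any prime in $(\max(n,NM),\infty)$. I expect no substantive obstacle; the lemma is a combinatorial packaging step that, together with Lemma~\ref{lm.fullcycleW}, will let us collapse the $k$ identical blocks of the Simple Multi-Cycle Reservoir of Theorem~\ref{thm.main.pr} into the desired SCR/Twin SCR configurations with $\pm 1$ input weights.
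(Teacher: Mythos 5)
Your proof is correct, and it takes a genuinely different route from the paper's. The paper fixes a vector-space basis $\{E_i\}_{i=1}^{nm}$ of $\mathbb{M}_{n\times m}(\mathbb{R})$ consisting of $\pm 1$ matrices, writes $V=\sum_i a_i E_i$, rounds each coefficient to $b_i/N$, and takes the $F_j$ to be $|b_i|$ copies of $\operatorname{sgn}(b_i)E_i$; since the resulting count $k_0=\sum_i|b_i|$ need not be coprime to $n$, it is padded up to the smallest valid $k<k_0+n$ by appending extra copies of $E_1$, which contributes a second error term $nL/N$ that must be budgeted separately. You instead construct the $F_j$ entrywise: each target entry $V_{pq}$ is rounded to the nearest point $(2a_{pq}-k)/N$ of the attainable progression, and the $j$-th matrix is assembled independently at each position. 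This buys a cleaner error analysis (a single entrywise rounding error $1/N$, converted to the operator norm via the Frobenius bound $\sqrt{nm}/N$, with no padding correction), and it dispatches the coprimality requirement up front by taking $k$ to be a prime exceeding $\max(n,NM)$ rather than adjusting after the fact. The paper's basis-decomposition phrasing transfers verbatim to the complex Corollary~\ref{cor.simpleV} (splitting into $\pm1$ and $\pm i$ basis matrices), but your construction adapts just as easily by treating $\operatorname{Re}(V)$ and $\operatorname{Im}(V)$ separately, and the resulting $k$ is of comparable magnitude in both arguments; downstream uses in Theorems~\ref{thm.main.cscr} and~\ref{thm.main.rscr} only need the approximation bound and $\gcd(k,n)=1$, both of which you deliver.
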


\begin{proof} Let $E:=\{E_i\}_{i=1}^{nm}\subset  \mathbb{M}_{n \times m}\left(\left\{-1,1\right\}\right)$ be a vector space basis for $\mathbb{M}_{n\times m}(\mathbb{R})$ such that entries of each $E_i$ are $\pm 1$. There exists constants $a_i\in\mathbb{R}$ such that
\[V=\sum_{i=1}^{nm} a_i E_i. \] 

Let $L:=\max\{\|E_i\|: 1\leq i\leq nm\}$. 
Pick an integer $N$ large enough such that $\frac{nm}{N}<\frac{\delta}{2L}$. Since $m>1$, we also have $\frac{nL}{N}<\frac{\delta}{2}$.
Pick a set of integers $B:= \{b_i: 1\leq i\leq nm\}$ such that: 
\[\left|a_i - \frac{b_i}{N}\right| < \frac{\delta}{2Lnm}.\]
We can always find such $b_i$ since allowable values of $b_i$ lie inside an interval of length $\frac{N\delta}{Lnm} > 2$ centred at $N a_i \in \mathbb{R}$. 

Let $k_0=\sum_{i=1}^{nm} |b_i|$ and pick the smallest integer $k\geq k_0$ such that $\gcd(k,n)=1$. It is clear that $k<k_0+n$. 

 {Now we pick $k$ matrices $\{F_1, \cdots, F_k\} \subset E$ as follows}:
\begin{enumerate}
    \item For each $1\leq i\leq nm$, $\operatorname{sgn}(b_i)E_i$ is a matrix whose entries are $\pm 1$. For each $i$ we then take $|b_i|$-copies of $\operatorname{sgn}(b_i)E_i$.

    Repeating the process across all $i \in \left\{1,\ldots,nm\right\}$, we obtain a total number of $k_0 = \sum_{i=1}^{nm} |b_i|$ matrices whose entries are all $\pm 1$ (namely $\operatorname{sgn}(b_i)E_i$ for each $b_i \in B$). We label these matrices as $F_1, \dots, F_{k_0}$. 
    % {I changed this a bit, please see if it makes sense.}
    \item Pick $k-k_0$ copies of a single basis matrix $E_j$. {Without loss of generality, we may choose $j = 1$.} 
    These will be labelled as $F_{k_0+1}$ through $F_k$. 
\end{enumerate}

Now by construction:
\begin{align*}
    \norm{V-\frac{1}{N} \sum_{j=1}^k F_j} 
    &= \norm{\sum_{i=1}^{nm} a_i E_i - \frac{1}{N} \sum_{j=1}^k F_j} \\
    &\leq \norm{\sum_{i=1}^{nm} a_i E_i - \frac{1}{N} \sum_{j=1}^{k_0} F_j} + \norm{\frac{1}{N} \sum_{j=k_0+1}^k F_j} \\
    &= \norm{\sum_{i=1}^{nm} \left(a_i - \frac{|b_i| \operatorname{sgn}(b_i)}{N}\right) E_i} + \norm{\frac{1}{N} \sum_{j=k_0+1}^k F_j} \\
    &\leq \sum_{i=1}^{nm} \left|a_i - \frac{b_i}{N}\right| \norm{E_i} + \frac{1}{N} \sum_{j=k_0+1}^k \norm{F_j} \\
    &< nm \times \frac{\delta}{2Snm} \times L + \frac{|k-k_0|}{N} \times L\\
    &\leq \frac{\delta}{2} + \frac{nL}{N} < \delta 
\end{align*} % \qedhere
\end{proof}

 {
\begin{remark}\label{rmk.simpleV}
    By the proof of Lemma~\ref{lm.simpleV}, we see that $k > k_0=\sum_{i=1}^{nm} |b_i| > nm$. Typically $b_i$'s are assumed to be larger than $1$,  {which results in increased dimensionality. This is the price for having \emph{both} a full-cyclic permutation state coupling \emph{and} restricted binary complex input weights}.
\end{remark}
}

\begin{corollary}\label{cor.simpleV} 
For any $n\times m$ complex matrix $V$ and $\delta>0$, there exists $k$ matrices $\{F_1, \cdots, F_k\}$ where each $F_i\in \mathbb{M}_{n \times m}\left(\pm 1\right)$ or $\mathbb{M}_{n \times m}\left(\pm i\right)$ and a constant integer $N>0$ such that:
% \commentred{Should be $\mathbb{M}_{n \times m}\left(\pm 1, \pm i \right)$? or are there no mixing? Re: we can achieve no mixing, so each row is either $\pm 1$ or $\pm i$. $M(\pm1, \pm i)$ is weaker than what we got (since a row may have both $\pm 1$ and $\pm i$).} 
\[\norm{V-\frac{1}{N} \sum_{j=1}^k F_j} < \delta\]
Moreover, $k$ can be chosen such that $\gcd(k,n)=1$. 
\end{corollary}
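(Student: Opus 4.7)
The plan is to reduce the complex case to the real case already handled by Lemma~\ref{lm.simpleV} by splitting $V$ into its real and imaginary parts. Write $V = V_1 + i V_2$ where $V_1, V_2 \in \mathbb{M}_{n\times m}(\mathbb{R})$. If we can approximate $V_1$ by an average of $\pm 1$ matrices and $V_2$ by an average of $\pm 1$ matrices, then multiplying the second approximation by $i$ turns those matrices into ones with entries in $\{\pm i\}$, and we get the desired approximation of $V$.

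In more detail, first I would apply Lemma~\ref{lm.simpleV} (ignoring the gcd requirement for now) to $V_1$ with error tolerance $\delta/3$, obtaining an integer $N_1$ and matrices $G_1, \ldots, G_{p} \in \mathbb{M}_{n \times m}(\{\pm 1\})$ with $\|V_1 - \frac{1}{N_1}\sum_{j=1}^p G_j\| < \delta/3$. Analogously, apply the lemma to $V_2$ to obtain $N_2$ and $H_1, \ldots, H_q \in \mathbb{M}_{n \times m}(\{\pm 1\})$ with $\|V_2 - \frac{1}{N_2}\sum_{j=1}^q H_j\| < \delta/3$. To combine both sums under a single denominator, set $N_0 = N_1 N_2$, replace each $G_j$ by $N_2$ copies of itself and each $iH_j$ by $N_1$ copies of itself; then
\[
\Big\| V - \tfrac{1}{N_0} \Big( \textstyle\sum_{j=1}^{p} N_2 G_j + \sum_{j=1}^{q} N_1 (i H_j) \Big) \Big\| < \tfrac{2\delta}{3},
\]
and the total count of matrices is $k_0 := p N_2 + q N_1$, with each matrix lying in either $\mathbb{M}_{n\times m}(\{\pm 1\})$ or $\mathbb{M}_{n\times m}(\{\pm i\})$.

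Finally, I would enforce $\gcd(k,n) = 1$ using the padding trick from Lemma~\ref{lm.simpleV}. Replace $N_0$ by a larger common denominator $N$ (multiplying every $G_j$- and $iH_j$-block by the appropriate factor) chosen large enough that $nL/N < \delta/3$, where $L$ is a uniform bound on the operator norms of all admissible $\pm 1$ and $\pm i$ matrices in $\mathbb{M}_{n \times m}$. After this rescaling, pick the smallest $k \ge k_0 N/N_0$ with $\gcd(k,n) = 1$; such a $k$ exists with $k - k_0 N/N_0 < n$, and we pad the family with $k - k_0 N/N_0$ extra copies of a single fixed basis matrix $E \in \mathbb{M}_{n\times m}(\{\pm 1\})$. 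The extra padding contributes a term of norm at most $nL/N < \delta/3$, so by the triangle inequality the total error stays below $\delta$. The main subtlety is just careful bookkeeping to ensure the gcd adjustment does not spoil the error bound, which is handled exactly as in the real case by choosing $N$ large.
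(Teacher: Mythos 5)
Your proposal is correct, and it is essentially the modification of Lemma~\ref{lm.simpleV} that the paper has in mind (the paper omits the proof, calling it an easy adaptation): splitting $V$ into real and imaginary parts, approximating each by an average of $\pm 1$ matrices, and multiplying the imaginary-part family by $i$ is exactly the intended route, mirroring the decomposition the paper itself uses in Theorem~\ref{thm.main.rscr}. The only bookkeeping point worth making explicit is that your rescaled denominator $N$ must be chosen as an integer multiple of $N_0 = N_1 N_2$ so that the duplication counts remain integers, which your ``appropriate factor'' phrasing already accommodates.
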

One can easily modify the proof of Lemma~\ref{lm.simpleV} to prove this Corollary. The proof is omitted here. 

We now prove that Complex Simple Cycle Reservoirs are universal.

\begin{theorem}\label{thm.main.cscr} For any reservoir system $R=(W,V,h)$ of dimensions $(n,m,d)$ that satisfies the assumptions of Definition~\ref{def.lrc} and any $\epsilon>0$, there exists a $\mathbb{C}$-SCR $R'=(W',V', h')$ of dimension $(n',m,d)$ that is $\epsilon$-close to $R$. 
Moreover, $\norm{W}=\norm{W'}$ and  $h'$ is $h$ with linearly transformed domain. 
\end{theorem}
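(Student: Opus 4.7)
My plan is to assemble the ingredients already developed in the paper. By Theorem~\ref{thm.dilation} followed by Theorem~\ref{thm.to.permutation}, I may assume without loss of generality that the original reservoir has the form $R = (\lambda P, V, h)$ where $P$ is an $n \times n$ full-cycle permutation and $\lambda = \|W\| \in (0,1)$; any error introduced in this reduction can be absorbed into a final $\epsilon/3$-type budget. The remaining task is to impose the binary structure on the input-to-state coupling while preserving the single full cycle in $W$.

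The key idea is to combine Corollary~\ref{cor.simpleV} (which approximates $V$ by an averaged sum of $\pm 1, \pm i$-valued matrices with the crucial coprimality $\gcd(k,n)=1$) with Lemma~\ref{lm.fullcycleW} (which welds $k$ copies of $P$ in a block-cyclic arrangement into a single full-cycle permutation $P_1$ of size $nk$, precisely when $\gcd(k,n)=1$). Given $\delta > 0$ to be fixed later via uniform continuity of $h$ on the (compact) state space, I invoke Corollary~\ref{cor.simpleV} to obtain matrices $F_1, \ldots, F_k \in \mathbb{M}_{n\times m}(\{\pm 1\}) \cup \mathbb{M}_{n\times m}(\{\pm i\})$, an integer $N$, and $\gcd(k,n)=1$, such that $\|V - \tfrac{1}{N}\sum_{j=1}^k F_j\| < \delta$. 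I then set $W' := \lambda P_1 \in \mathbb{M}_{nk \times nk}$, $V' := [F_1^\top\ F_2^\top\ \cdots\ F_k^\top]^\top$ (vertical stacking, entries in $\{\pm 1, \pm i\}$), and
\[
h'(\x'^{(1)},\ldots,\x'^{(k)}) := h\!\left(\tfrac{1}{N} \sum_{i=1}^k \x'^{(i)}\right),
\]
which is $h$ with a linearly transformed domain. By Lemma~\ref{lm.fullcycleW}, $W'$ is a contractive full-cycle permutation with $\|W'\| = \lambda$, so $R' = (W', V', h')$ is indeed a $\mathbb{C}$-SCR.

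It remains to verify $\epsilon$-closeness. Unrolling the block-cyclic dynamics, the $i$-th component state satisfies
\[
\x'^{(i)}_t = \sum_{j \geq 0} (\lambda P)^j F_{(i-j) \bmod k}\, \Bc_{t-j},
\]
with the convention $F_0 = F_k$. Summing over $i$ telescopes the cyclic index, yielding
\[
\tfrac{1}{N}\sum_{i=1}^k \x'^{(i)}_t = \sum_{j \geq 0} (\lambda P)^j \Bigl(\tfrac{1}{N}\sum_{\ell=1}^k F_\ell\Bigr) \Bc_{t-j},
\]
which differs from $\x_t = \sum_{j\geq 0}(\lambda P)^j V \Bc_{t-j}$ by at most $\frac{M \delta}{1-\lambda}$ in norm (using $\|\lambda P\|=\lambda<1$ and the uniform bound $M$ on the input). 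Choosing $\delta$ small enough so that this gap triggers the uniform-continuity modulus of $h$ below $\epsilon$ completes the bound $\|\By_t - \By'_t\| < \epsilon$.

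The main obstacle—and the reason the construction needs both Corollary~\ref{cor.simpleV} and Lemma~\ref{lm.fullcycleW} acting in concert—is the tension between two requirements: on one hand, to approximate an arbitrary complex $V$ by binary-valued matrices one must stack $k$ copies of the cycle to gain enough discrete degrees of freedom, but on the other hand naive stacking produces a block-\emph{diagonal} (hence multi-cycle) coupling like in Theorem~\ref{thm.main.pr}. The coprimality condition $\gcd(k,n)=1$, available from Corollary~\ref{cor.simpleV} and exactly what Lemma~\ref{lm.fullcycleW} demands, is the algebraic hinge that lets the block-cyclic rearrangement remain a \emph{single} cycle of length $nk$, so that $W'$ qualifies as a $\mathbb{C}$-SCR coupling while the averaged readout recovers the original dynamics up to $\delta$.
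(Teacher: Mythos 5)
Your proposal is correct and follows essentially the same route as the paper's own proof: reduce to a full-cycle permutation coupling via Theorems~\ref{thm.dilation} and~\ref{thm.to.permutation}, use Corollary~\ref{cor.simpleV} with $\gcd(k,n)=1$ to binarize $V$, weld the $k$ cycle copies into a single full cycle via Lemma~\ref{lm.fullcycleW}, and observe that the averaged readout telescopes the cyclically shifted $F_j$'s back into $\frac{1}{N}\sum_j F_j$, giving the state-space bound $\frac{M\delta}{1-\lambda}$ exactly as in the paper (which simply pre-scales $\delta$ by $\frac{1-\lambda}{M}$ rather than fixing it at the end). The only differences are immaterial indexing conventions in the block-cyclic unrolling.
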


\begin{proof}
% [Proof of Theorem~\ref{thm.main.cscr}] 
Consider a reservoir system $R=(W,V,h)$ with dimensions $(n,m,d)$. Without loss of generality, we assume that $W$ is a contractive full-cycle permutation. For any $\epsilon>0$, pick $\delta>0$ such that $\norm{\Bx-\Bx'}<\delta$ implies $|h(\Bx)-h(\Bx')|<\epsilon$. We now construct a $\mathbb{C}$-SCR $R'=(W',V',h')$ that is $\epsilon$-close to $R$. 

 Applying Corollary~\ref{cor.simpleV}, we obtain $n\times m$ matrices  $\left\{F_j\right\}_{j=1}^k$ whose entries are either all $\pm 1$ or $\pm i$, and $N>0$ sufficiently large such that 
\[\norm{V-\frac{1}{N} \sum_{j=1}^k F_j} < \frac{(1-\lambda)\delta}{M},\]
where $M=\sup\{\norm{\Bc_t}\}_{t\in \mathbb{Z}_-}$, and $k$ satisfies $\gcd(k,n)=1$. Let $W=\lambda P$ with $\lambda=\|W\|$ and $P$ being a full-cycle permutation. Applying Lemma~\ref{lm.fullcycleW} we obtain a full-cycle permutation  $nk\times nk$ matrix $P_1$. Consider the reservoir system $R'=(W',V',h')$ defined by the triplet: 
\begin{align*}
W'=\lambda P_1 = \lambda \cdot \begin{bmatrix} 
0   & 0  & 0 & & \hdots      & 0   & P \\
P & 0  & 0 & & \hdots      & 0   & 0 \\
 0   & P &  0     & \hdots   & & 0 & 0 \\
 \vdots   &   & \ddots&    &   &  \vdots & \vdots\\
 0   & \hdots  &  &  &   & P  & 0 
\end{bmatrix}, &\quad V'=\begin{bmatrix} F_1 \\ F_2 \\ \vdots \\ F_k\end{bmatrix}\\
h'\left({\Bx^{(1)}}, \dots, {\Bx^{(k)}}\right) &= h\left(\frac{1}{N}\sum_{j=1}^k {\Bx^{(j)}}\right).
\end{align*}

For any input stream $\{\Bc_t\}_{t\in \mathbb{Z}_-}$, the solution to $R$ and $R'$ are given respectively by:
\[{\By}_t = h\left(\sum_{i\geq 0} W^i V \Bc_{t-i}\right),\quad \By'_t = h'\left(\sum_{i\geq 0} \left(W'\right)^i V' \Bc_{t-i}\right).\]
Note that, by construction $W'$ cycles through $k$ subspaces with dimension $n$ and applies $W$ on each of them. Thus,
\[\left(W'\right)^i V' \Bc_{t-i}= \begin{bmatrix} W^i F_{\left(1+i \mod k\right)} \Bc_{t-i} \\ W^i F_{\left(2+i \mod k\right)} \Bc_{t-i} \\ \vdots \\ W^i F_{\left(k+i \mod k\right)} \Bc_{t-i} \end{bmatrix}.\]
Since for each $i$, $\{F_{\left(1+i \mod k\right)}, F_{\left(2+i \mod k\right)}, \dots, F_{\left(k+i \mod k\right)}\}$ is simply a permutation of $\{F_1, \dots, F_k\}$. We obtain: 
\begin{align*}
\By'_t &= h\left(\frac{1}{N} \sum_{j=1}^k \sum_{i\geq 0}  W^i F_{\left(j+i \mod k\right)} \Bc_{t-i}\right) \\
&= h\left( \sum_{i\geq 0} W^i \frac{1}{N} \left(\sum_{j=1}^k  F_{j}\right) \Bc_{t-i}\right),
\end{align*}

%\begin{align*}
%    {y'}_t &= h\left(\frac{1}{N} \sum_{i\geq 0} \sum_{j=1}^k W^i F_{j+i} c_{t-i}\right) \\
%    &= h\left( \sum_{i\geq 0} W^i \frac{1}{N} \left(\sum_{j=1}^k  F_{j}\right) c_{t-i}\right) \\
%\end{align*}
Now let $\Bx'_t=\sum_{i\geq 0} W^i \frac{1}{N} \left(\sum_{j=1}^k  F_{j}\right) \Bc_{t-i}$ and $ {\Bx}_t=\sum_{i\geq 0} W^i V \Bc_{t-i}$. We have,
\begin{align*}
    \norm{{\Bx}_t-\Bx'_t} &= \norm{\sum_{i\geq 0} W^i \left(V - \frac{1}{N} \sum_{j=1}^k F_j\right) \Bc_{t-i}} \\
    &\leq \sum_{j\geq 0} \lambda^i M \norm{V - \frac{1}{N} \sum_{j=1}^k F_j}\\
    &< \frac{1}{1-\lambda} M \frac{(1-\lambda)\delta}{M} = \delta
\end{align*}
Therefore ${\By}_t$ is $\epsilon$-close to $\By'_t$ by continuity of $h$ and thus $R'$ is $\epsilon$-close to $R$. By construction, $W'$ is a contractive full-cycle permutation and entries of $V'$ are either all $\pm 1$ or $\pm i$. 
\end{proof}

Using a similar argument, we can also prove that an assembly of two SCR (Simple Cycle Reservoir over $\mathbb{R}$) is universal. The argument is to apply the same process for $\operatorname{Re}(V)$ and $\operatorname{Im}(V)$ respectively to  {get a Multi-Cycle Reservoir of order 2.}
%n assembly of two SCR. 

\begin{theorem}\label{thm.main.rscr} For any reservoir system $R=(W,V,h)$ of dimensions $(n,m,d)$ and any $\epsilon>0$, there exists a Twin Simple Cycle Reservoir $R'=(W',V', h')$ of dimension $(n',m,d)$ that is $\epsilon$-close to $R$. 
Moreover, $\norm{W}=\norm{W'}$ and  $h'$ is $h$ with linearly transformed domain. 
\end{theorem}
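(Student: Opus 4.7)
The plan is to split the complex input-to-state matrix $V$ into its real and imaginary parts and approximate each separately using one of the two cyclic blocks of the Twin SCR, combining them in the readout via the complex mixing coefficients $1/N_1$ and $i/N_2$.

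By Theorems~\ref{thm.dilation} and \ref{thm.to.permutation} I may assume without loss of generality that $W=\lambda P$ for some $n\times n$ full-cycle permutation $P$ with $\lambda=\norm{W}<1$. Write $V=V_R+iV_I$ with real $n\times m$ matrices $V_R,V_I$. Given $\epsilon>0$, I choose $\delta>0$ via uniform continuity of $h$ on the compact state space, and apply Lemma~\ref{lm.simpleV} separately to $V_R$ and $V_I$ to obtain $\pm 1$-valued families $\{F_j^{(R)}\}_{j=1}^{k_1}, \{F_j^{(I)}\}_{j=1}^{k_2}\subset\mathbb{M}_{n\times m}(\{-1,1\})$ and positive integers $N_1,N_2$ with $\gcd(k_\ell,n)=1$ for $\ell=1,2$, approximating $V_R$ and $V_I$ to within $(1-\lambda)\delta/(2M)$ respectively, where $M$ uniformly bounds the input stream.

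Next, using Lemma~\ref{lm.fullcycleW}, I build two full-cycle permutations $P_1,P_2$ of sizes $nk_1$ and $nk_2$ by cyclically arranging $k_1$ (resp.\ $k_2$) copies of $P$. The Twin SCR candidate is then $R'=(W',V',h')$ with $W'=\lambda P_1\oplus\lambda P_2$, $V'$ the vertical stack of the $F_j^{(R)}$'s on top of the $F_j^{(I)}$'s, and readout
\[
h'(\Bx^{(1)},\Bx^{(2)}) := h\!\left( \tfrac{1}{N_1}\sum_{j=1}^{k_1} \Bx^{(1)}_j + \tfrac{i}{N_2}\sum_{j=1}^{k_2} \Bx^{(2)}_j \right),
\]
where $\Bx^{(\ell)}_j\in\mathbb{C}^n$ is the $j$-th $n$-dimensional sub-block of the $\ell$-th component state. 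By construction, $V'$ has entries in $\{-1,+1\}$, $\norm{W'}=\lambda=\norm{W}$, and $h'$ is $h$ with linearly transformed domain, matching the structural requirements of Definition~\ref{def.twin}.

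To verify $\epsilon$-closeness I will replicate the cyclic-bookkeeping calculation from the proof of Theorem~\ref{thm.main.cscr} for each block independently: the cyclic shifts over $k_\ell$ sub-blocks collapse the readout argument at time $t$ to $\sum_{i\geq 0} W^i V^\dagger \Bc_{t-i}$, where $V^\dagger := \tfrac{1}{N_1}\sum_j F_j^{(R)} + \tfrac{i}{N_2}\sum_j F_j^{(I)}$. A triangle-inequality combination of the two approximation bounds yields $\norm{V-V^\dagger}<(1-\lambda)\delta/M$, and the usual geometric-series bound together with uniform continuity of $h$ closes the argument. The only real subtlety is that Lemma~\ref{lm.fullcycleW}'s coprimality hypothesis must hold in both blocks simultaneously, which is exactly why Lemma~\ref{lm.simpleV} is stated with the freedom to enforce $\gcd(k,n)=1$; the remainder is essentially accounting for the $\mathbb{C}$-SCR argument split across two real cycles rather than one complex one.
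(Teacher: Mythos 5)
Your proposal is correct and follows essentially the same route as the paper's proof: decompose $V$ into real and imaginary parts, apply Lemma~\ref{lm.simpleV} to each to get $\pm 1$-valued approximants with the coprimality condition, assemble two full-cycle blocks via Lemma~\ref{lm.fullcycleW}, and recombine with mixing coefficients $1/N_1$ and $i/N_2$ in the readout, reducing the error estimate to the bookkeeping already done for Theorem~\ref{thm.main.cscr}. No gaps.
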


\begin{proof}
% [Proof of Theorem~\ref{thm.main.rscr}] 
 {Consider the reservoir system $R=(W,V,h)$ where $W$ is a contractive full-cycle permutation.} Write $V=V_r+iV_i$ where $V_r, V_i$ are real and imaginary parts of $V$ respectively. For any $\epsilon>0$, pick $\delta>0$ such that $\norm{\x-\x'}<\delta$ implies $|h(\Bx)-h(\Bx')|<\epsilon$. We now  {construct a Multi-Cycle Reservoir of order 2, $R'=(W',V',h')$,} that is $\epsilon$-close to $R$.

Apply Lemma~\ref{lm.simpleV} on $V_r$ and $V_i$ to obtain constants $N_r, N_i > 0$ and \textit{real-valued} matrices $\{F_1,\dots, F_{k_r}\}$ and $\{G_1, \dots, G_{k_i}\}$ whose entries are all $\pm 1$, such that
\begin{align*}
    \norm{V_r-\frac{1}{N_r} \sum_{j=1}^{k_r} F_j} &< \frac{(1-\lambda)\delta}{2M}, \\
    \norm{V_i-\frac{1}{N_i} \sum_{j=1}^{k_i} G_j} &< \frac{(1-\lambda)\delta}{2M},
\end{align*}
 {where $k_r, k_i > 0$ are chosen such that $\gcd(k_r, n)=\gcd(k_i, n)=1$.} Let $W=\lambda P$ for a full-cycle permutation $P$ and $\lambda=\|W\|$. Apply Lemma~\ref{lm.fullcycleW} \textit{twice} to obtain a $nk_r\times nk_r$ full-cycle permutation $P_r$ and a $nk_i\times nk_i$ full-cycle permutation $P_i$. Consider reservoir system $R'=(W',V',h')$ defined by the triplet:
\begin{align*}
W'=\lambda (P_r \oplus P_i), &\quad 
V'=\begin{bmatrix} F_1 \\ \vdots \\ F_{k_r} \\ G_1 \\ \vdots \\ G_{k_i}\end{bmatrix}, \\
h'(\Bx^{(1)}, \dots, \Bx^{(k_r)}, \Bx'^{(1)}, \dots, \Bx'^{(k_i)}) &= h\left(\frac{1}{N_r} \sum_{j=1}^{k_r} \Bx^{(j)} +\frac{i}{N_i} \sum_{j=1}^{k_i} \Bx'^{(j)} \right)
\end{align*}
 {$R'=(W',V',h')$ is a Multi-Cycle Reservoir of order 2}. Consider the state of the system given by: 
\begin{align*}
    \Bx_t &=\sum_{n\geq 0} W^n V \Bc_{t-n} \\
    &= \sum_{n\geq 0} W^n (V_r+iV_i) \Bc_{t-n} = \sum_{n\geq 0} W^n V_r \Bc_{t-n} + i\cdot \sum_{n\geq 0} W^n V_i \Bc_{t-n}.
\end{align*}
The rest of the proof is similar to that of Theorem~\ref{thm.main.cscr}. The first half of the sum $ \sum_{n\geq 0} W^n V_r \Bc_{t-n}$ can be arbitrarily approximated by 
\[ \sum_{n\geq 0} (\lambda P_r)^n \frac{1}{N_r} \left(\sum_{j=1}^{N_r} F_j\right) \Bc_{t-n},
\]
and by a symmetric argument the second half of the sum $ \sum_{n\geq 0} W^n V_i \Bc_{t-n}$ can be arbitrarily approximated by 
\[
\sum_{n\geq 0} (\lambda P_i)^n \frac{1}{N_i} \left(\sum_{j=1}^{N_i} G_j\right) \Bc_{t-n}.
\]Therefore it follows from a similar argument of the proof of Theorem~\ref{thm.main.cscr} that $\Bx_t$ is close to $\Bx_t'$, and $R'$ is $\epsilon$-close to $R$. 
\end{proof}

\section{Summary and Universality in the Space of Fading Memory Filters}
\label{sec:summary}

 {Having finished our exploration of universality properties of simple reservoir structures employing only scaled full-cycle permutations in the dynamic coupling and binary input-to-state coupling, we now present in Figure \ref{fig:fullpaper} a birds-eye overview of the results and argumentation flow presented so far.
Each arrow represents an approximation step
with the symbol $\prec$ indicating an increase in the approximant state space dimensionality. 
% at the end point of the arrow. 
By Definition~\ref{def.lrc}, the symbols $W, V, h, n$ denote the dynamic coupling, input-to-state coupling, readout, and dimension of the state space, respectively of the corresponding reservoir system. 
}

{
We begin with an arbitrary linear reservoir system $R=(W,V,h)$ in the top-left-hand-corner. We first construct, by Theorem~\ref{thm.dilation}, a linear reservoir system $R_U$ with a unitary dynamic coupling $W_U$ which is $\epsilon$-close to $R$. By Theorem~\ref{thm.to.permutation}, we then transform $R_U$ into an $\epsilon$-close linear reservoir system $R'_U$ with contractive cyclic-permutation dynamic coupling. 
Based on this, given any linear reservoir system $R$, we can construct an $\epsilon$-close linear reservoir system which is a \PermutationReservoir (Theorem~\ref{thm.main.pr}), Complex Simple Cycle Reservoir (Theorem~\ref{thm.main.cscr}), or a Twin Simple Cycle Reservoir (Theorem~\ref{thm.main.rscr}) respectively.
}

\begin{center}
\begin{figure}[ht!]
%  \includestandalone[width=\textwidth]{tikz/full_flow}%     without .tex extension
  % or use \input{mytikz}
  \begin{adjustbox}{width=0.9\textwidth}
  \begin{tikzpicture}[
    pre/.style={=stealth',semithick},
    post/.style={->,shorten >=1pt,>=stealth',thick}
    ]

 % Nodes
 \node[black] (orig) at (-7,3) {$\begin{cases}
     R:=\left(W, V, h\right)\\
     W \in \mathbb{C}_{n\times n} \\
     V \in \mathbb{C}_{m \times n} \\
     h:\mathbb{C}^n \to \mathbb{C}^d\\
     \lambda:= \norm{W}\\
 \end{cases}$};
 
 \node[black] (unit) at (1,3) {$\begin{cases}
 \text{ \textbf{Unitary universal} }\\
 R_U := \left(W_U, V_U, h_U\right)& \\
 W_U := \lambda \cdot U \in \mathbb{C}_{\left(N+1\right)n\times \left(N+1\right)n} \\
 U := \begin{bmatrix}
        W   &   &       &    & D_{W^*} \\
        D_W &   &       &    & -W^* \\
            & I &       &    & \\
            &   & \ddots&    &   \\
            &   &       & I  & 0 
        \end{bmatrix},\quad 
 V_U: =\begin{bmatrix}
    V  \\
    0
    \end{bmatrix} \\
 h_U(x) = h\left(P_n(x)\right) \\
 n_U := \left(N+1\right)n
 \end{cases}$};
 
 \node[black] (perm) at (-5.5,-4) {$\begin{cases}
 \text{ \textbf{Cyclic Permutation universal} }\\
 R'_U := \left(W'_U, V'_U, h'_U\right)  \\
 W'_U = \lambda \cdot \begin{bmatrix} U & 0 \\ 0 & D\end{bmatrix} \cong \lambda \cdot P, \\
 P \text{ -- cyclic permutation, } P \in \mathbb{C}_{n'_U  \times n'_U } \\
 V'_U: =S \begin{bmatrix}
    V_U  \\
    0
    \end{bmatrix},\quad 
 h'_U(\Bx) = h_U\left(P_{n'_U}\left(S^*\Bx\right)\right) \\
 n'_U > n_U,\, S \text{-- unitary transform}
 \end{cases}$};

 \node[black] (permpm1) at (-5.5,-10) {$\begin{cases}
 \text{ \textbf{SMCR universal} }\\
 R_P := \left(W_P, V_P, h_P\right) & \\
 W_P \in \mathbb{C}_{\left(n'_U\right)^2\cdot m \times \left(n'_U\right)^2\cdot m} \\
 W_P \text{ -- contractive permutation.}\\
 W_P := \begin{bmatrix} \lambda\cdot P & & \\
    & \ddots & \\
    & & \lambda\cdot P\end{bmatrix} \\
V_P \in \mathbb{M}_{m \times \left(n'_U\right)^2\cdot m}\left(\left\{-1,1\right\}\right) \\
h_P(\textbf{x}_1, \cdots, \textbf{x}_{n'_U  m}) := h'_U\left( \sum_{i=1}^{n'_U  m} a_i \textbf{x}_i\right) \\
n_p = n'_U \cdot \left(n'_U  m\right)
 \end{cases}$};

 \node[black] (cscr) at (5,-4) {$\begin{cases}
 \text{ \textbf{$\mathbb{C}$-SCR universal} }\\
 R_{\mathbb{C}} := \left(W_{\mathbb{C}}, V_{\mathbb{C}}, h_{\mathbb{C}}\right) & \\
 W_{\mathbb{C}} := \lambda \cdot P_1 \in \mathbb{C}_{n'_U \cdot k \times n'_U \cdot k} \\
 P_1 := \begin{bmatrix} 
0   & 0  & 0 & & \hdots      & 0   & P \\
P & 0  & 0 & & \hdots      & 0   & 0 \\
 0   & P &  0     & \hdots   & & 0 & 0 \\
 \vdots   &   & \ddots&    &   &  \vdots & \vdots\\
 0   & \hdots  &  &  &   & P  & 0 
\end{bmatrix} \cdots \left(\dagger\right) \\
V_{\mathbb{C}} \in \mathbb{M}_{m \times n'_U \cdot k}\left(\left\{-1,1\right\}\right) \text{ OR } V_{\mathbb{C}} \in \mathbb{M}_{m \times n'_U \cdot k}\left(\left\{-i,i\right\}\right) \\
h_{\mathbb{C}}({\Bx_1}, \dots, {\Bx_k}) = h'_U\left(\frac{1}{N^\mathbb{C}}\sum_{j=1}^k {\Bx_j}\right) \\
n_\mathbb{C} = n'_U \cdot k \text{; } k \text{ satisfies } \gcd(k,n'_U ) = 1
 \end{cases}$};

 \node[black] (rscr) at (4,-11) {$\begin{cases}
  \text{ \textbf{Twin SCR universal} }\\
 R_{\mathbb{R}} := \left(W_{\mathbb{R}}, V_{\mathbb{R}}, h_{\mathbb{R}} \right)& \\
 W_{\mathbb{R}} := \lambda (P_r \oplus P_i) = \lambda \cdot \begin{bmatrix} P_r & 0 \\ 0 & P_i \end{bmatrix},\\
 \text{where both } P_r, P_i \text{, has form } \left(\dagger\right).  \\
 W_{\mathbb{R}} \in \mathbb{C}_{n'_U\cdot (k_r + k_i) \times  n'_U\cdot (k_r + k_i)}\\
 V_{\mathbb{R}} \in \mathbb{M}_{m \times  n'_U \cdot (k_r + k_i)}\left(\left\{-1,1\right\}\right)\\
 h_{\mathbb{R}}(\Bx_1, \dots, {\Bx_{k_r}}, {\Bx'_1}, \dots, {\Bx'_{k_i}}) \\
 \quad \quad = h'_U\left(\frac{1}{N^\mathbb{R}_r} \sum_{j=1}^{k_r} {\Bx_j} + \frac{i}{N^\mathbb{R}_i} \sum_{j=1}^{k_i} {\Bx'_j} \right) \\
 n_\mathbb{R} = n'_U\cdot (k_r + k_i) \\
 k_r,k_i \text{ satisfies } \gcd(k_r,n'_U) = \gcd(k_r,n'_U) = 1
 \end{cases}$};
 
 % Arrows
 \path [->,thick] (orig) edge node [above] {Thm \ref{thm.dilation}} (unit);
 \path [->,thick] (orig) edge node [below] {$\prec$} (unit);

 % \draw[post,rounded corners=5pt] (unit)-|(below_orig_perm);
 % \draw[post,rounded corners=5pt] (below_orig_perm)|-(perm);
 
 \draw[post,rounded corners=5pt] (unit) -- node[below] {} ++(0,-4 + 0.5) -| node[below] {} ++(-6.5 ,-0.5) -- (perm);
 \draw[-, ultra thin] (-3,-1 + 0.5) --node [below] {$\prec$}  (0,-1 + 0.5);
 \draw[-, ultra thin] (-3,-1 + 0.5) --node [above] {Thm \ref{thm.to.permutation}}  (0,-1 + 0.5);
 % \draw[post,rounded corners=5pt] (orig)|-(rscr); % final arrow

 % Arrows
 \path [->,thick] (perm) edge node [left] { Thm \ref{thm.main.pr}} (permpm1);
 \path [->,thick] (perm) edge node [right] {$\prec$} (permpm1);

 \path [->,thick] (perm) edge node [above] {Thm \ref{thm.main.cscr}} (cscr);
 \path [->,thick] (perm) edge node [below] {$\prec$} (cscr);
 
 % \path [->] (cscr) edge node [left] {Thm \ref{thm.main.2}} (rscr);
 % \path [->] (cscr) edge node [right] {$\prec$} (rscr);

 \draw[post,rounded corners=5pt] (perm) -- node[below] {} ++(3,-3.5) |- node[above] {} ++(0 ,-2) |-(rscr);
 \draw[-, ultra thin] (-2,-11) --node [below] {$\prec$}  (-0.5 ,-11);
 \draw[-, ultra thin] (-2,-11) --node [above] {Thm \ref{thm.main.rscr}}  (-0.5 ,-11);
 
 % \draw[->] (2.5,3) --node [above] {Prop. \ref{prop.perturb.unitary}} (4,3);
 % \draw[->] (2.5,3) --node [above] {Prop. \ref{prop.perturb.unitary}} (4,3);

\end{tikzpicture}
\end{adjustbox}
  \caption{Detailed flow of the main results of this paper}
  \label{fig:fullpaper}
\end{figure}
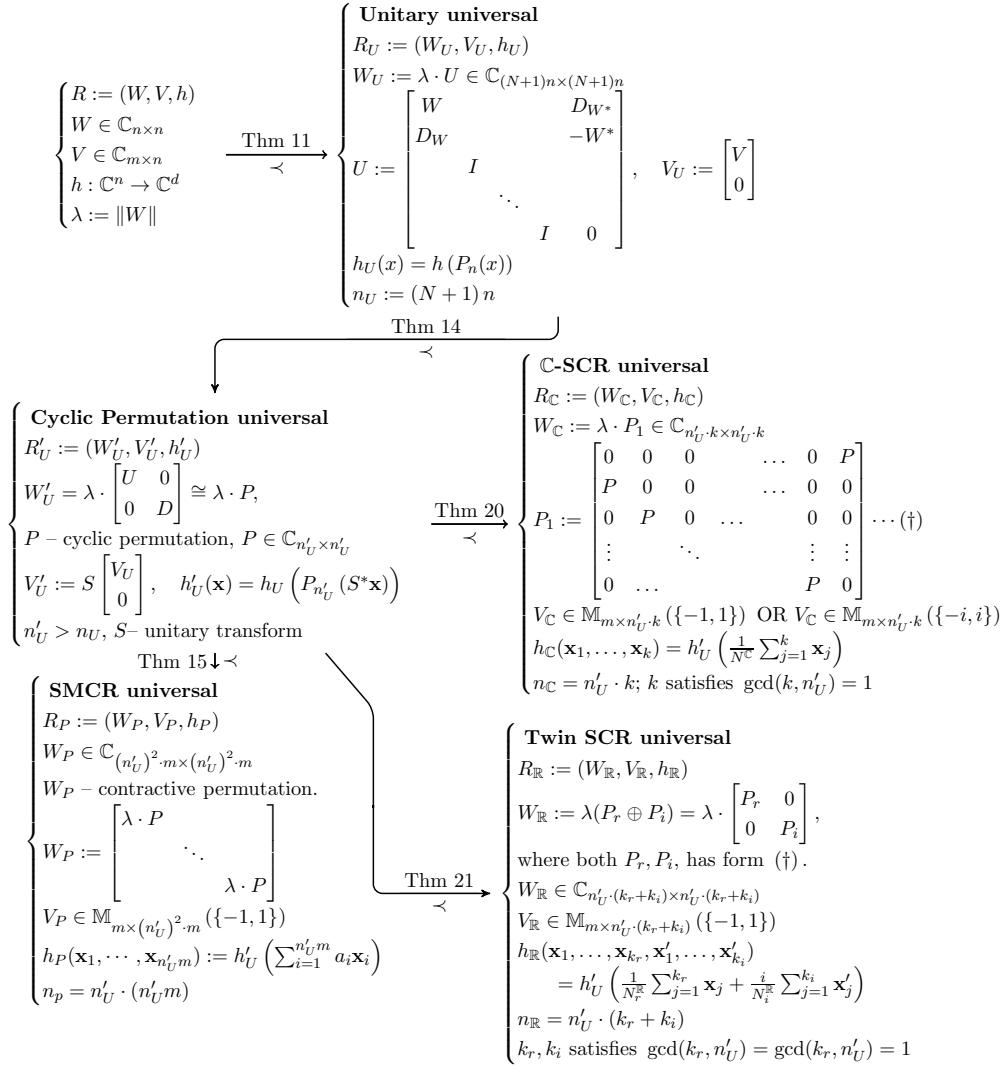
\end{center}

 {
Combining this result with Corollary 11 of \cite{Grigoryeva2018} implies that all three types of linear reservoir systems described in Definition~\ref{def.rc} are universal in the category of time-invariant fading memory filters.
We illustrate this with $\mathbb{C}$-SCR as the cases of \PermutationReservoir \ and Twin SCR follow the same argumentation. 

\cite{Grigoryeva2018}(Corollary 11) establishes that linear reservoir systems with polynomial readouts are universal, in the sense that any time-invariant fading memory filter can be approximated by to arbitrary precision by a linear reservoir system. 
%The main results of this paper demonstrates the universality of several much smaller classes of linear reservoir systems.  This, combined with the result in \cite[Corollary 11]{Grigoryeva2018}, implies the universality of these classes of reservoir systems among time-invariant fading memory filters. 
In other words,  given any time-invariant fading-memory filter $F$ and $\epsilon >0$, \emph{there exists} a linear reservoir system $R$ with polynomial readout $h$ and the corresponding linear reservoir function $H_R$ such that $H_R$ is $\epsilon$-close to $F$ in the space of real-valued continuous functions over the space of uniformly bounded input. 
By Theorem~\ref{thm.main.cscr}, given a linear reservoir system $R$ with a polynomial readout $h$, we can \emph{construct} a $\mathbb{C}$-SCR  $R'$ that is $\epsilon$-close to $R$ in the space of linear reservoir systems. Moreover, the readout of $R'$, denoted by $h'$, is $h$ with linearly transformed domain,
meaning that $h'$ is a polynomial of the same degree as $h$. 
It is worth noting that our results are not restricted to polynomial readouts, as long as they are continuous.
}

%We therefore arrive at the following theorem:  {Not sure if this is the correct level of explicit-ness}

\begin{theorem}
\label{thm.univ}
    { Any time-invariant fading memory filter over uniformly bounded inputs can be 
   % ( {uniformly - WHAT DOES IT MEAN HERE?}) \comment{The assumptions of both papers is that the inputs is a uniformly bounded stream.}  {[THAT I UNDERSTAND AND IT IS THE FIRST ``UNIFORMLY IN THE SENTENCE, BUT I WAS WONDERING WHAT THE SECOND ``UNIFORMLY" MEANS IN ... can be uniformly approximated ...]}
   approximated to arbitrary precision by a \PermutationReservoir, a $\mathbb{C}$-SCR, or a Twin SCR, each endowed with a polynomial readout.}  
\end{theorem}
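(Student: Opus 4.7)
The plan is to obtain Theorem~\ref{thm.univ} by composing two approximation results: the universality of linear reservoir systems with polynomial readouts in the space of time-invariant fading memory filters on one hand, and the three constructive approximation theorems (Theorems~\ref{thm.main.pr}, \ref{thm.main.cscr}, \ref{thm.main.rscr}) for linear reservoir systems on the other. Concretely, I would use a two-step $\varepsilon/2$ argument tied together by the triangle inequality in the relevant function space of filters.

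First, fix a time-invariant fading memory filter $F$ over uniformly bounded inputs, and fix $\varepsilon>0$. By Corollary~11 of \cite{Grigoryeva2018}, there exists a linear reservoir system $R=(W,V,h)$ with a \emph{polynomial} readout $h$ such that the filter $H_R$ induced by $R$ satisfies $\|F-H_R\|<\varepsilon/2$ in the supremum norm over the space of uniformly bounded input streams. Without loss of generality one can rescale so that $W$ is contractive and $R$ satisfies the assumptions of Definition~\ref{def.lrc}. Next, depending on which of the three target architectures is desired, apply the corresponding constructive theorem (\ref{thm.main.pr}, \ref{thm.main.cscr}, or \ref{thm.main.rscr}) with tolerance $\varepsilon/2$ to obtain a reservoir system $R'=(W',V',h')$ in the desired class (SMCR, $\mathbb{C}$-SCR, or Twin SCR) that is $\varepsilon/2$-close to $R$. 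Combining the two bounds with the triangle inequality yields $\|F-H_{R'}\|<\varepsilon$, giving the desired approximation.

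The key observation that makes the polynomial readout propagate through the construction is precisely the ``$h'$ is $h$ with linearly transformed domain'' clause in each of Theorems~\ref{thm.main.pr}, \ref{thm.main.cscr}, and \ref{thm.main.rscr}: if $h$ is a polynomial of degree $k$ and $h'(\Bx)=h(A\Bx)$ for some fixed linear map $A$, then $h'$ is again a polynomial of degree at most $k$. Hence the readout of the approximant $R'$ inherits the polynomial structure of $h$, and $R'$ falls in the hypothesised ``endowed with a polynomial readout'' class.

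I expect no substantive obstacle beyond bookkeeping: the only mildly delicate point is to verify that the two notions of $\varepsilon$-closeness (filter-level closeness in \cite{Grigoryeva2018} and output-level closeness as in the preceding constructions of this paper) are compatible, which follows because both are expressed as uniform closeness of the output streams over all admissible inputs, and because the uniform boundedness of the input stream and contractivity of the dynamic coupling keep all states in a common compact set on which $h$ is uniformly continuous. Once this is noted, the composition of the two approximations is straightforward, and the same argument applies verbatim to each of the three architectures.
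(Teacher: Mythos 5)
Your proposal is correct and follows essentially the same route as the paper: the paper likewise composes Corollary~11 of \cite{Grigoryeva2018} with Theorems~\ref{thm.main.pr}, \ref{thm.main.cscr}, and \ref{thm.main.rscr}, and relies on the ``$h'$ is $h$ with linearly transformed domain'' clause to see that the approximant's readout remains polynomial. Your explicit $\varepsilon/2$ bookkeeping and the remark on reconciling the two notions of closeness are slightly more careful than the paper's informal presentation, but they do not constitute a different argument.
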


%  {YES, IT IS GOOD TO BE PRECISE. GIVEN A TIME-INVARIANT FADING MEMORY FILTER, \cite{Grigoryeva2018} SHOW THAT THERE IS A LINEAR RESERVOIR SYSTEM $R$ WITH POLYNOMIAL READOUT $h$ THAT APPROXIMATES IT ARBITRARILY WELL. WE THEN SHOW THAT THERE IS A RESERVOIR $R'$ WITH CYCLIC PERMUTATION STATE COUPLING, RESTRICTED BINARY COMPLEX INPUT WEIGHTS (of either $\pm 1$ or $\pm i$) AND POLYNOMIAL READOUT $h'$ (IN FACT $h'$ IS $h$ WITH LINEARLY ADAPTED DOMAIN) THAT APPROXIMATES $R$ ARBITRARILY WELL. HOWEVER, OUR THEORY IS NOT RESTRICTED TO POLYNOMIAL READOUTS, AS LONG AS THEY ARE CONTINUOUS.}

\section{Conclusion}

 {
We have shown that even severely restricted linear reservoir architectures with continuous readouts, employing only scaled full-cycle permutation dynamic couplings and binary input-to-state couplings are capable of universal approximation of any unrestricted linear reservoir system (with continuous readout) and hence any time-invariant fading memory filter over uniformly bounded input streams. 

These results support empirical studies reporting the competitive performance of simple cyclic reservoir structures (e.g.
\cite{rodan2010minimum,WANG2019184,1363951794068753792}), as well as theoretical investigations of the representational power of such architectures in terms of memory and state space organisation
(\cite{Tino_JMLR_2020,rodan2010minimum}).  

Universality guarantees of simple reservoir architectures that lend themselves naturally to physical implementations (\cite{bienstman2017,Appeltant2011InformationPU,NTT_cyclic_RC}) represent an important step in transferring reservoir computation ideas to real-world and industrial applications.  
}

\clearpage
\clearpage

\bibliography{references}

\end{document}